\documentclass{article}

\PassOptionsToPackage{numbers,sort&compress}{natbib}
\usepackage[preprint]{neurips_2026}

\usepackage[utf8]{inputenc}
\usepackage[T1]{fontenc}
\usepackage[hidelinks]{hyperref}
\usepackage{url}
\usepackage{booktabs}
\usepackage{graphicx}
\usepackage{amsfonts}
\usepackage{nicefrac}
\usepackage{microtype}
\usepackage{xcolor}
\usepackage{amsmath,amssymb,amsthm,mathtools}
\usepackage{enumitem}
\usepackage{array}
\usepackage{tabularx}
\usepackage{float}

\newcommand{\E}{\mathbb{E}}
\newcommand{\PP}{\mathbb{P}}
\newcommand{\R}{\mathbb{R}}

\newcommand{\1}{\mathbf{1}}
\newcommand{\dd}{\,\mathrm{d}}
\newcommand{\Bin}{\mathrm{Bin}}
\newcommand{\Mult}{\mathrm{Mult}}
\newcommand{\Bern}{\mathrm{Bernoulli}}
\newcommand{\supp}{\operatorname{supp}}

\newcommand{\cP}{\mathcal{P}}

\newcommand{\cB}{\mathcal{B}}

\newcommand{\sym}{\mathrm{sym}}
\newcommand{\Vinf}{V_\infty}
\newcommand{\cG}{\mathcal{G}}

\theoremstyle{plain}
\newtheorem{theorem}{Theorem}[section]
\newtheorem{proposition}[theorem]{Proposition}
\newtheorem{corollary}[theorem]{Corollary}
\newtheorem{lemma}[theorem]{Lemma}
\theoremstyle{definition}
\newtheorem{definition}[theorem]{Definition}
\newtheorem{example}[theorem]{Example}
\theoremstyle{remark}
\newtheorem{remark}[theorem]{Remark}

\title{When Can Voting Help, Hurt, or Change Course? Exact Structure of Binary Test-Time Aggregation}
\author{Yi Liu\\
York University}
\date{}

\begin{document}
\raggedbottom
\maketitle

\begin{abstract}
Majority voting is one of the few black-box interventions that can improve a fixed stochastic predictor: repeated access can be cheaper than changing a high-capability model.  Classical fixed-competence theory makes this intervention look monotone---more votes help above the majority threshold and hurt below it.  We show that this picture is fundamentally incomplete.  Under the de Finetti representation for exchangeable repeated correctness, voting is governed by a latent distribution of per-example correctness probabilities.  Even simple latent mixtures can generate sharply different voting curves, including nonmonotone behavior and, in an explicit construction, infinitely many trend changes.  The full latent law determines the curve, but the curve does not determine the law.  The exact object recovered by voting is a signed voting signature: at each binomial variance scale, it records excess latent mass above rather than below the majority threshold.  Our main theorem proves that the complete odd-budget curve and this signature are equivalent: the curve increments are signed Hausdorff moments, and the full curve recovers the signature uniquely.  This viewpoint explains shape phenomena, branch-symmetric nonidentifiability, realizability, variation, and endpoint rates.  It also separates estimation regimes: direct per-example success-probability information targets the full signature, whereas fixed-depth grouped labels reveal only a finite prefix.
\end{abstract}

\section{Introduction}
\label{sec:intro}

Modern prediction systems are often stochastic at inference time.  A language model can be sampled repeatedly; an ensemble can be queried by its members; a black-box service can return different outputs under repeated calls or randomized decoding.  Majority voting is the most basic way to turn such repeated outputs into a single binary decision.  It is also a practical performance-extraction lever: after the model, prompt, sampler, or ensemble has been fixed, voting can sometimes convert residual stochasticity into additional accuracy.  This matters most when the underlying system is black-box, expensive to modify, or already beyond direct human correction on many instances: repeated access may be easier to obtain than a better model.

The usual intuition comes from the fixed-competence calculation.  If every vote is correct with probability $q>1/2$, then increasing the odd vote budget improves accuracy and drives error to zero; if $q<1/2$, more majority votes amplify the wrong direction; if $q=1/2$, voting changes nothing.  This suggests that voting curves should be simple and essentially monotone.  The intuition is wrong once examples have different latent difficulty.  A population may contain examples on which repeated calls are reliably correct, examples on which they are reliably wrong, and examples close to the threshold.  These groups are amplified at different rates as the vote budget changes.

Exchangeability supplies the latent object directly.  If the repeated correctness bits for an example are infinitely exchangeable, de Finetti's theorem gives a random correctness probability $Q\in[0,1]$ such that, conditional on $Q=q$, the repeated correctness bits are iid Bernoulli$(q)$.  Exchangeability is a broad symmetry assumption: it allows heterogeneous problem instances and heterogeneous black-box behavior, as long as the repeated correctness sequence is invariant to the order in which repeated calls are viewed.  The population odd-budget voting curve is then
\[
    V_n=\E\,P_n(Q),
    \qquad
    P_n(q)=\PP\{\Bin(2n+1,q)\ge n+1\}.
\]
This formulation also covers the conditional-iid idealization used for repeated stochastic prediction: first draw an instance-specific latent $Q$, then draw repeated calls conditionally iid given that $Q$.

Figure~\ref{fig:shape-examples} is the first surprise.  All curves start from the same single-call accuracy $V_0=3/4$, and all arise from simple discrete latent laws for $Q$.  Nevertheless, voting can be constant, drop quickly, dip below and later surpass its starting value, rise and then fall, or rise slowly.  These possibilities do not require adaptive aggregation, dependence between calls, multiclass tie effects, or an adversarial voting rule.  They occur for binary labels, exchangeable repeated correctness, and ordinary unweighted majority vote.  Such mixtures are also natural: populations often combine easy cases, near-threshold cases, misleading cases, and solved cases in different proportions.

\begin{figure}[t]
    \centering
    \includegraphics[width=0.86\linewidth]{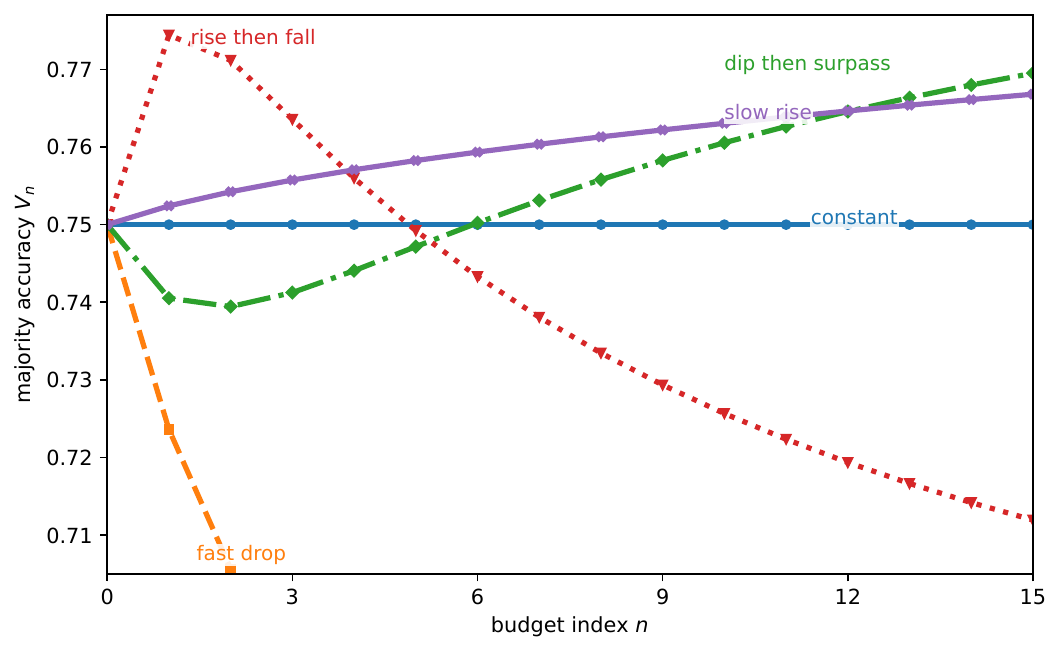}
    \caption{Different behaviors of voting curves, plotted against the vote-budget index $n$ so that the actual odd vote budget is $2n+1$.  All curves have the same one-vote accuracy $3/4$ and use ordinary odd-budget majority voting.  Simple latent mixtures can make voting stay flat, hurt quickly, dip below and later surpass its starting value, rise and then fall, or rise slowly.}
    \label{fig:shape-examples}
\end{figure}

The phenomenon extends beyond the finite gallery.  We also give a construction for which the majority-vote curve changes direction infinitely many times.  The construction places alternating signed mass near the majority threshold so that successive components take control at increasingly large budgets.  It rules out a general eventual-monotonicity theorem for binary iid majority voting under no further sign or regularity assumptions.

The latent law of $Q$ determines the voting curve, but the inverse problem has a smaller target.  Branch-symmetric changes to the latent distribution can leave all majority accuracies unchanged, so the curve cannot recover the full de Finetti mixing law.  The exact one-to-one object is a signed voting signature.  For a latent law $\Pi$ of $Q$, define
\[
    \omega_\Pi(B)
    =\int_{[0,1]}\1\{q(1-q)\in B\}(2q-1)\,\Pi(\dd q),
    \qquad B\subseteq[0,1/4].
\]
The map $q\mapsto q(1-q)$ records the variance scale at which majority voting acts, while $2q-1$ records the branch orientation: above-threshold mass helps and below-threshold mass hurts.  Theorem~\ref{thm:omega-moments} makes this exact: the voting increments are signed Hausdorff moments of $\omega_\Pi$, and the complete odd-budget curve recovers $\omega_\Pi$ uniquely.  The curve may forget much of the latent law, but it forgets exactly the branch-symmetric nuisance component and no more.

\subsection{Contributions}
This paper studies majority voting as a test-time performance-extraction mechanism.  Its contributions are grouped into three parts:
\begin{enumerate}[leftmargin=1.6em,itemsep=0.25em]
    \item \emph{Voting-curve phenomena.}  We show that even binary iid majority voting can produce constant, fast-decreasing, dip-then-surpass, rise-and-fall, slow-rise, and infinitely reversing curves.  These examples refute any simple ``more votes help'' or ``more votes hurt'' principle under heterogeneous latent correctness.
    \item \emph{Signed voting signature.}  We identify the exact quotient of the de Finetti law that voting can see: the signed measure $\omega_\Pi$.  The odd-budget curve is equivalent to the signed Hausdorff moment sequence of this measure, and the full curve recovers the signature uniquely while leaving branch-symmetric nuisance mass unidentified.
    \item \emph{Calculus and estimation access.}  We derive realizability, branch-asymmetry, shape, variation, and endpoint-rate consequences, and separate two estimation regimes: direct access to per-example success probabilities targets the full signature, while finite repeat-depth grouped labels identify only a finite signature prefix.
\end{enumerate}

\section{Related work}
\label{sec:related}

Recent ML work increasingly treats inference-time sampling as a resource.  In language-model reasoning, self-consistency samples several reasoning paths and votes over answers \citep{wang2023}; Large Language Monkeys studies repeated sampling as a test-time compute axis, including the distinction between automatically verifiable and unverifiable domains \citep{brown2024}; process verification studies how extra inference compute interacts with intermediate reasoning steps \citep{lightman2024}; and test-time scaling studies how additional sampling or search changes performance \citep{muennighoff2025}.  These papers motivate repeated calls as a model-agnostic way to extract residual performance from a fixed system.  This paper gives an exact structure theorem for the binary iid voting curve and identifies its signed Hausdorff-moment invariant.

Repeated or stochastic predictions are also widely used as uncertainty evidence.  For LLMs, this includes SelfCheckGPT \citep{manakul2023}, semantic uncertainty \citep{kuhn2023}, semantic entropy \citep{farquhar2024}, Kernel Language Entropy \citep{nikitin2024}, ConU \citep{wang2024}, LM-Polygraph \citep{vashurin2025}, confidence-based generation \citep{lin2024} and confidence-guided self-consistency \citep{taubenfeld2025}, and tests of consistency assumptions \citep{xiao2025}.  Outside LLMs, MC dropout \citep{gal2016}, deep ensembles \citep{lakshminarayanan2017}, ensemble uncertainty analyses \citep{rahaman2021}, and Uncertainty Baselines \citep{nado2021} study uncertainty and robustness from stochastic or ensemble predictions.  Those works analyze uncertainty, calibration, selection, or empirical robustness.  Our setting is narrower but exact: labeled binary correctness, exchangeable repeated calls, and unweighted majority.  Under these assumptions, repeated predictions determine the voting curve only through the signed voting signature $\omega_\Pi$.

The finite-repeat appendix connects to population Bernoulli learning.  \citet{tian2017} and \citet{vinayak2019} study recovery of a population of Bernoulli parameters from binomial samples, while NPMLE self-regularization \citep{polyanskiy2020} and computational work on mixture estimation \citep{zhang2024} provide tools for this observation model.  We use the same grouped observation model, but the identified target is a finite prefix of signed signature moments; full-signature estimation requires either richer per-example information or additional regularization at fixed repeat depth.  Classical background enters as a mathematical anchor: jury-theorem surveys describe the fixed-competence majority baseline \citep{dietrich2021}, and finite exchangeability \citep{diaconis1980} and de Finetti--Hausdorff connections \citep{dolera2020} explain the random-effects representation.  The novelty here is the signed quotient of the de Finetti law that exactly preserves the binary iid voting curve.

\section{Binary test-time voting and de Finetti representation}
\label{sec:binary-model}

Consider repeated binary correctness bits $B_1,B_2,\ldots$ obtained by calling a fixed stochastic predictor on the same randomly drawn instance.  The calls need not be independent marginally over the population: different instances can have different difficulty, and a black-box model can behave differently across problem types.  What is needed for the latent representation is exchangeability of the repeated correctness sequence.  In the conditional-iid special case, this exchangeability is realized by first drawing an instance-level correctness probability $Q\in[0,1]$, then drawing $B_j\mid Q\sim\Bern(Q)$ iid.  Order-dependent protocols such as adaptive sampling schedules or feedback between repeated calls fall outside this exact exchangeable representation.

Let $\Pi$ be the law of $Q$.  For odd budget $2n+1$, let $P_n(q)$ and $V_n(\Pi)$ be the latent-$q$ majority accuracy and the population voting curve, respectively:
\[
    P_n(q)=\sum_{j=n+1}^{2n+1}\binom{2n+1}{j}q^j(1-q)^{2n+1-j},
    \qquad
    V_n(\Pi)=\int_{[0,1]}P_n(q)\,\Pi(\dd q),
    \quad n=0,1,2,\ldots,
\]
with $P_0(q)=q$.

The following standard representation records why the latent distribution is the natural object: under infinite exchangeability, such a mixing law exists automatically.

\begin{proposition}
\label{prop:exchangeable-representation}
If $B_1,B_2,\ldots$ is an infinitely exchangeable binary correctness sequence, then there exists a random variable $Q\in[0,1]$ with law $\Pi$ such that, conditional on $Q$, the variables $B_j$ are iid $\Bern(Q)$.  Consequently every infinitely exchangeable binary repeated-correctness experiment has an odd-budget majority curve of the form $V_n(\Pi)=\int P_n(q)\,\Pi(\dd q)$.
\end{proposition}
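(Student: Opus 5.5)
The plan has two parts: invoke de Finetti's theorem for infinite exchangeable binary sequences to get the mixing variable $Q$, then compute the majority accuracy by conditioning on $Q$.

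For the first part, I would apply de Finetti's theorem in its Hewitt--Savage form for $\{0,1\}$-valued sequences. If a construction is wanted rather than a citation, define $Q=\lim_{m\to\infty}\frac1m\sum_{j=1}^m B_j$. This limit exists almost surely by the reverse-martingale (or exchangeable strong law) argument with respect to the exchangeable $\sigma$-field. One then shows that for every finite pattern $b\in\{0,1\}^k$,
\[
\PP\{B_1=b_1,\dots,B_k=b_k\}=\E\,Q^{s}(1-Q)^{k-s},\qquad s=\textstyle\sum_i b_i.
\]
To obtain this, average the pattern indicator over all injections of $\{1,\dots,k\}$ into $\{1,\dots,m\}$ and let $m\to\infty$; sampling without replacement converges to sampling with replacement. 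Equivalently, this is Hausdorff's moment theorem applied to the completely monotone sequence $\PP\{B_1=\dots=B_k=1\}$. The displayed identity says exactly that, conditional on $Q$, the $B_j$ are iid $\Bern(Q)$. Let $\Pi$ be the law of $Q$. This step carries all the real content, and I will cite it as the standard theorem (cf.\ the finite-exchangeability references) rather than reprove it.

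For the second part, fix $n$. The event that the majority of $B_1,\dots,B_{2n+1}$ is correct is $\{\sum_{j\le 2n+1}B_j\ge n+1\}$, which is a finite union of patterns. Conditional on $Q=q$, the sum is distributed as $\Bin(2n+1,q)$, so the conditional probability of this event is $P_n(q)$. Taking expectations by the tower property, or equivalently by summing the displayed identity over the patterns with $s\ge n+1$, gives $V_n=\E P_n(Q)=\int_{[0,1]}P_n(q)\,\Pi(\dd q)$. Since $P_n$ is a polynomial and bounded on $[0,1]$, there are no integrability issues.

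The only nontrivial obstacle is the existence of $Q$, which is de Finetti's theorem itself. The rest is a one-line conditioning computation. I would note that exchangeability of the infinite sequence is used essentially: finite exchangeability alone gives only an approximate mixture, in the sense of Diaconis--Freedman.
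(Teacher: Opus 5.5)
Your proposal is correct and follows the paper's own proof: invoke the binary de Finetti theorem to obtain $Q\sim\Pi$, then average the conditional majority accuracy $P_n(Q)$ over $\Pi$, with the same remark that finite exchangeability gives only an approximate representation. One minor precision point in your optional sketch: the unconditional pattern identity gives the mixture representation in law, and asserting conditional iid given the limiting frequency itself needs that identity with an arbitrary bounded $\phi(Q)$ inserted, which the same averaging-over-injections argument provides---though the unconditional identity already suffices for the curve formula.
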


We focus on odd budgets because even budgets with fair tie-breaking collapse to the preceding odd budget.

\begin{proposition}
\label{prop:even-collapse}
For $n\ge1$, define
\[
    P^{\mathrm{even}}_n(q)
    =\PP\{\Bin(2n,q)>n\}+\frac12\PP\{\Bin(2n,q)=n\}.
\]
Then $P^{\mathrm{even}}_n(q)=P_{n-1}(q)$ for every $q\in[0,1]$.
\end{proposition}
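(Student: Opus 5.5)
The plan is to condition on the last vote and reduce the even-budget accuracy to odd-budget events on the first $2n-1$ votes. Write $S_{2n}=S_{2n-1}+B_{2n}$, where $S_{2n-1}\sim\Bin(2n-1,q)$ is independent of $B_{2n}\sim\Bern(q)$. Since $2n-1$ is odd, $S_{2n-1}$ either is at least $n$ or at most $n-1$. We split on these cases:

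\emph{If $S_{2n-1}\ge n+1$:} then $S_{2n}>n$ regardless of the last vote.

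\emph{If $S_{2n-1}=n$:} then $S_{2n}=n+1>n$ when $B_{2n}=1$, and $S_{2n}=n$ (a tie) when $B_{2n}=0$.

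\emph{If $S_{2n-1}=n-1$:} then $S_{2n}=n$ (a tie) when $B_{2n}=1$, and $S_{2n}<n$ otherwise.

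\emph{If $S_{2n-1}\le n-2$:} the even majority is lost.

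Collecting the contributions to $P^{\mathrm{even}}_n(q)$ gives
\[
P^{\mathrm{even}}_n(q)=\PP\{S_{2n-1}\ge n+1\}+\PP\{S_{2n-1}=n\}\Bigl(q+\tfrac12(1-q)\Bigr)+\PP\{S_{2n-1}=n-1\}\,\tfrac12 q .
\]
Since $P_{n-1}(q)=\PP\{S_{2n-1}\ge n\}$, it suffices to show
\[
\PP\{S_{2n-1}=n\}\,\tfrac12(1-q)=\PP\{S_{2n-1}=n-1\}\,\tfrac12 q .
\]
Both sides equal $\tfrac12\binom{2n-1}{n}q^n(1-q)^n$, because $\binom{2n-1}{n}=\binom{2n-1}{n-1}$.

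This symmetry of the binomial coefficients is the only substantive step. It says that the tie-broken win gained from state $n-1$ exactly offsets the half-win lost from state $n$. The remaining steps are bookkeeping, and the case $n=1$ needs no separate treatment.
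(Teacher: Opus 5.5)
Your proof is correct, but it takes a different route from the paper's. You go upward from $2n-1$ to $2n$ votes by appending an independent vote. The identity then reduces to the explicit binomial equality $\PP\{S_{2n-1}=n\}(1-q)=\PP\{S_{2n-1}=n-1\}\,q$, that is, $\binom{2n-1}{n}=\binom{2n-1}{n-1}$.

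The paper goes downward from $2n$ to $2n-1$ by a coupling. It deletes one of the $2n$ bits uniformly at random, independently of the bits, so the survivors are again $2n-1$ iid $\Bern(q)$ bits. Conditional on the $2n$ bits, the odd majority of the survivors reproduces the fair-tie even rule exactly. A strict majority survives deletion, and in an $n$--$n$ tie the deleted bit is correct with probability $n/(2n)=1/2$.

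Your key identity is that same fact in disguise. It says $\PP\{S_{2n}=n,\,B_{2n}=0\}=\PP\{S_{2n}=n,\,B_{2n}=1\}$: given a tie, the last vote is equally likely right or wrong.

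What each buys:
\begin{itemize}
\item The paper's argument needs no binomial algebra and uses only exchangeability of the $2n$ bits. It therefore gives the even/odd collapse directly at the population level, even for a finitely exchangeable sequence with no de Finetti mixture.
\item Your argument is more hands-on. It makes visible how the half-win lost from state $n$ is exactly offset by the half-win gained from state $n-1$. It relies on independence of the appended vote, so reaching mixtures means integrating over $q$ afterwards. That is harmless here because the statement is pointwise in $q$.
\end{itemize}
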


Thus odd indexing records exactly the distinct accuracies of unweighted majority vote under symmetric tie-breaking.  We keep majority fixed rather than adapting to the unobserved value of $q$: for $q>1/2$ it agrees with the MAP direction, while for $q<1/2$ the MAP direction is reversed.  The signed voting signature records this orientation.  Confidence-weighted aggregation or score-based tie-breaking would use additional per-call information and need not reduce to the same one-dimensional signed moment sequence.  The identity below is the algebraic source of the signed measure.

\begin{proposition}
\label{prop:pointwise-diff}
For every $n\ge0$ and $q\in[0,1]$,
\[
    P_{n+1}(q)-P_n(q)
    =\binom{2n+1}{n+1} q^{n+1}(1-q)^{n+1}(2q-1).
\]
Consequently, for a fixed $q>1/2$ the sequence $n\mapsto P_n(q)$ is increasing and discretely concave; for $q<1/2$ it is decreasing and discretely convex; for $q\in\{0,1/2,1\}$ it is constant.
\end{proposition}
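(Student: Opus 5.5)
The plan is to prove the identity by coupling the $(2n+3)$-vote experiment with the $(2n+1)$-vote experiment, and then read off the monotonicity and concavity claims from the closed form. Let $X\sim\Bin(2n+1,q)$ count correct votes among the first $2n+1$ calls, and let $Y\sim\Bin(2,q)$, independent of $X$, count correct votes among two additional calls. Then $P_{n+1}(q)=\PP\{X+Y\ge n+2\}$ and $P_n(q)=\PP\{X\ge n+1\}$.

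The majority decision can change only when $X$ sits at the boundary.

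\begin{itemize}
\item If $X\ge n+2$, both experiments are correct whatever $Y$ is.
\item If $X\le n-1$, both are wrong.
\item If $X=n+1$, the old vote is correct. The new vote becomes wrong exactly when $Y=0$, which has probability $(1-q)^2$.
\item If $X=n$, the old vote is wrong. The new vote becomes correct exactly when $Y=2$, which has probability $q^2$.
\end{itemize}

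Therefore
\[
P_{n+1}(q)-P_n(q)=\binom{2n+1}{n}q^{n}(1-q)^{n+1}q^2-\binom{2n+1}{n+1}q^{n+1}(1-q)^{n}(1-q)^2 .
\]
Since $\binom{2n+1}{n}=\binom{2n+1}{n+1}$, factoring out $\binom{2n+1}{n+1}q^{n+1}(1-q)^{n+1}$ leaves $q-(1-q)=2q-1$. This is the stated identity.

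For the consequences, write $D_n(q)=c_n\,(q(1-q))^{n+1}(2q-1)$ with $c_n=\binom{2n+1}{n+1}$. The sign of $D_n(q)$ is the sign of $2q-1$ whenever $q\notin\{0,1\}$. Also $D_n(q)\equiv0$ for $q\in\{0,1/2,1\}$, so those three values give constant sequences. This yields increasing for $q>1/2$ and decreasing for $q<1/2$.

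For discrete concavity and convexity, I would look at the ratio of consecutive increments:
\[
\frac{D_{n+1}(q)}{D_n(q)}=\frac{c_{n+1}}{c_n}\,q(1-q).
\]
Here
\[
\frac{c_{n+1}}{c_n}=\frac{(2n+3)(2n+2)}{(n+2)(n+1)}=\frac{2(2n+3)}{n+2}<4,
\]
and $q(1-q)\le 1/4$. So the ratio lies in $[0,1)$, which gives $|D_{n+1}|<|D_n|$ with the same sign. For $q>1/2$ the increments are therefore positive and decreasing, which is concavity. For $q<1/2$ they are negative and increasing toward zero, which is convexity.

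The main point needing care is the boundary bookkeeping in the coupling, specifically checking that only $X\in\{n,n+1\}$ contributes. The bound $c_{n+1}/c_n<4$ is the only other step that needs a real check, and the computation above handles it.
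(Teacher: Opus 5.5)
Your proposal is correct and matches the paper's proof essentially step for step. The paper uses the same coupling of $X\sim\Bin(2n+1,q)$ with an independent $Y\sim\Bin(2,q)$, the same boundary analysis at $X\in\{n,n+1\}$, and the same consecutive-increment ratio $\frac{2(2n+3)}{n+2}\,q(1-q)<1$ to obtain the monotonicity and discrete concavity/convexity claims.
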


The fixed-$q$ Condorcet picture follows immediately.  The mixed-$Q$ case is subtler because positive and negative branch contributions enter with different powers of $q(1-q)$.  Appendix~\ref{app:binary-proofs} proves the three propositions in this section.

\section{Voting behavior can take many shapes}
\label{sec:voting-shapes}

The fixed-$q$ picture says that majority voting is monotone once competence is fixed.  Population heterogeneity breaks this conclusion.  The five curves in Figure~\ref{fig:shape-examples} all start from $V_0=3/4$, yet their early voting behavior separates sharply.  Simple discrete latent laws can make the curve stay flat, fall quickly, dip below and later surpass its starting value, rise and then fall, or rise slowly.  Such behavior can arise in mixtures of solved, misleading, near-threshold, and reliable instances; Appendix~\ref{app:figure1-laws} lists the figure laws.

The finite gallery is only a visible sample.  The next example uses the signed-coordinate notation of Section~\ref{sec:signed-signature} to show that the curve can reverse direction infinitely many times.

\begin{example}
\label{ex:infinite-oscillation}
Set $r_j=\frac14 e^{-2^{-j}}$, $q_\pm(r)=(1\pm\sqrt{1-4r})/2$, and
\(a_j=e^{-j^2}/(S\sqrt{1-4r_j})\), where \(S=\sum_{j\ge1}e^{-j^2}/\sqrt{1-4r_j}<\infty\).  Let $b_j=q_+(r_j)$ for even $j$ and $b_j=q_-(r_j)$ for odd $j$, and define
\[
\Pi_{\mathrm{osc}}=\sum_{j\ge1}a_j\delta_{b_j},
\qquad
\omega_{\Pi_{\mathrm{osc}}}=\frac1S\sum_{j\ge1}(-1)^j e^{-j^2}\delta_{r_j}.
\]
For $k_j=3j2^j$, the signed moments satisfy
\(\operatorname{sign}\{\int r^{k_j}\,\omega_{\Pi_{\mathrm{osc}}}(\dd r)\}=(-1)^j\) for all sufficiently large $j$.  Since $V_{k_j}-V_{k_j-1}$ has the same sign as this moment up to a positive binomial factor, the majority-vote curve of $\Pi_{\mathrm{osc}}$ changes direction infinitely often.
\end{example}

The example is deliberately adversarial, but its mechanism is the same as in ordinary nonmonotone curves.  At budget step $n$, the kernel $r^{n+1}$ magnifies mass closest to $r=1/4$.  Alternating upper- and lower-branch atoms approaching the threshold at separated scales can take turns controlling the increment sign.  Thus binary majority voting has no global eventual-monotonicity theorem.  The proof is in Appendix~\ref{app:branch-proofs}.

\section{The signed voting signature}
\label{sec:signed-signature}

\begin{definition}
\label{def:omega}
For a latent law $\Pi\in\cP([0,1])$, define the signed measure $\omega_\Pi$ on $[0,1/4]$ by
\[
    \omega_\Pi(B)
    =\int_{[0,1]}\1\{q(1-q)\in B\}(2q-1)\,\Pi(\dd q),
    \qquad B\in\cB([0,1/4]).
\]
We call $\omega_\Pi$ the signed voting signature of $\Pi$.
\end{definition}

The diverse behaviors above are governed by one signed object.  The map $q\mapsto q(1-q)$ identifies the two branches $q$ and $1-q$, which have the same binomial variance scale under majority voting.  The factor $2q-1$ records which side of the majority threshold the point lies on.  Points near $1/2$ have small orientation weight but large leverage at large budgets because $q(1-q)$ is large.

The following result is the central representation theorem.  It says that the full odd-budget curve is in one-to-one correspondence with the signed Hausdorff moment sequence of $\omega_\Pi$.  The latent law of $Q$ determines the curve, but only this signed signature is exactly recoverable from the curve.

\begin{theorem}
\label{thm:omega-moments}
For every latent law $\Pi$,
\begin{align}
    V_0(\Pi) &= \frac12\{1+\omega_\Pi([0,1/4])\},
    \label{eq:v0-omega}\\
    V_{n+1}(\Pi)-V_n(\Pi)
    &=\binom{2n+1}{n+1}\int_{[0,1/4]} r^{n+1}\,\omega_\Pi(\dd r),
    \qquad n\ge0.
    \label{eq:delta-omega}
\end{align}
Hence the full odd-budget curve $(V_n(\Pi))_{n\ge0}$ is determined by $\omega_\Pi$.  Conversely, the full curve determines $\omega_\Pi$ uniquely; equivalently, two latent laws have the same voting curve exactly when they have the same signed voting signature.
\end{theorem}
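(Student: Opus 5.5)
The plan is to obtain both displayed identities by pushing the latent integrals forward along $q\mapsto r=q(1-q)$, and then to settle the converse with the Hausdorff moment problem on $[0,1/4]$.

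\emph{Forward identities.} For \eqref{eq:v0-omega} I use $P_0(q)=q=\tfrac12\{1+(2q-1)\}$. Integrating against $\Pi$ gives $V_0=\tfrac12\{1+\int(2q-1)\,\Pi(\dd q)\}$. The last integral equals $\omega_\Pi([0,1/4])$ by Definition~\ref{def:omega}, taking $B=[0,1/4]$, which contains $q(1-q)$ for every $q$.

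For \eqref{eq:delta-omega} I integrate Proposition~\ref{prop:pointwise-diff} against $\Pi$. This gives $V_{n+1}-V_n=\binom{2n+1}{n+1}\int (q(1-q))^{n+1}(2q-1)\,\Pi(\dd q)$. The change of variables for signed pushforwards then rewrites the integral as $\int r^{n+1}\,\omega_\Pi(\dd r)$. Concretely, $\omega_\Pi$ is the pushforward under $T(q)=q(1-q)$ of the finite signed measure $(2q-1)\Pi(\dd q)$. So $\int g\,\dd\omega_\Pi=\int g\circ T\,(2q-1)\,\dd\Pi$ holds for indicators by definition, extends to simple functions by linearity, and extends to bounded Borel $g$ by dominated convergence, applied separately to the positive and negative parts. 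Telescoping then shows that $(V_n)_{n\ge 0}$ is a function of $\omega_\Pi$.

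\emph{Converse.} Suppose $\Pi,\Pi'$ have the same curve, and set $\mu=\omega_\Pi-\omega_{\Pi'}$, a finite signed Borel measure on $[0,1/4]$.
\begin{itemize}
\item Equation~\eqref{eq:v0-omega} gives $\int r^0\,\dd\mu=0$.
\item Equation~\eqref{eq:delta-omega}, after dividing by the positive binomial coefficient, gives $\int r^{k}\,\dd\mu=0$ for all $k\ge1$.
\end{itemize}
Hence $\int p\,\dd\mu=0$ for every polynomial $p$. By Weierstrass, polynomials are uniformly dense in $C[0,1/4]$, and $|\mu|$ is finite, so $\int f\,\dd\mu=0$ for all continuous $f$.

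The remaining step, which is routine, is to conclude $\mu=0$. One route is the Riesz representation theorem: a finite signed Borel measure on a compact metric space is determined by its action on continuous functions. Alternatively, approximate the indicator of a closed set by continuous functions decreasing to it and use dominated convergence; this gives $\mu(F)=0$ for all closed $F$, and the $\pi$–$\lambda$ theorem extends it to all Borel sets.

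The reverse implication, that the same signature gives the same curve, is the forward part already proved. The only real care is in setting up the signed pushforward rigorously; everything else is bookkeeping.
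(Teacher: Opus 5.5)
Your proposal is correct and follows the paper's proof essentially step for step. You use the same forward identities, via $P_0(q)=\tfrac12\{1+(2q-1)\}$ and by integrating Proposition~\ref{prop:pointwise-diff} through the signed pushforward along $q\mapsto q(1-q)$. You use the same converse: all polynomial moments vanish, polynomials are dense in $C([0,1/4])$, and a finite signed Borel measure is determined by its integrals of continuous functions. Your explicit justification of the pushforward change of variables and your closed-set/$\pi$--$\lambda$ alternative spell out what the paper leaves as standard.
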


The proof is in Appendix~\ref{app:binary-proofs}.  The next identities make the recovery map from curve increments to signature moments explicit.

\begin{corollary}
\label{cor:moment-recovery}
The odd-budget curve recovers every polynomial moment of the signed voting signature:
\[
    \int_{[0,1/4]} \omega_\Pi(\dd r)=2V_0(\Pi)-1,
    \qquad
    \int_{[0,1/4]} r^k \, \omega_\Pi(\dd r)
    =\frac{V_k(\Pi)-V_{k-1}(\Pi)}{\binom{2k-1}{k}},
    \qquad k\ge1.
\]
If $Q$ has density $f$ and branch-asymmetry density $g$, then
\[
    \int_0^{1/4} g(r)\,\dd r=2V_0-1,
    \qquad
    \int_0^{1/4} r^k g(r)\,\dd r
    =\frac{V_k-V_{k-1}}{\binom{2k-1}{k}},
    \qquad k\ge1.
\]
\end{corollary}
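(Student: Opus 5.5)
The statement immediately above is Corollary~\ref{cor:moment-recovery}, and I would derive it by rearranging the two identities of Theorem~\ref{thm:omega-moments}. First I take \eqref{eq:v0-omega}. It gives $\omega_\Pi([0,1/4])=2V_0(\Pi)-1$, which is the zeroth moment $\int\omega_\Pi(\dd r)$. As a sanity check, this matches $\int(2q-1)\,\Pi(\dd q)=2\E Q-1$ with $V_0=\E Q$.

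Next I take \eqref{eq:delta-omega} with $n=k-1\ge0$. This gives $V_k-V_{k-1}=\binom{2k-1}{k}\int r^k\,\omega_\Pi(\dd r)$. The binomial coefficient $\binom{2k-1}{k}$ is a positive integer for $k\ge1$, so dividing by it yields the stated formula for every $k\ge1$. All moments are finite because $\omega_\Pi$ has total variation at most $1$ and $r^k$ is bounded on $[0,1/4]$.

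For the density version, I first pin down what the branch-asymmetry density $g$ is; the intended meaning is the Lebesgue density of $\omega_\Pi$ on $[0,1/4]$. To find it, I split $[0,1]$ into the branches $q_\pm(r)=(1\pm\sqrt{1-4r})/2$ and change variables $r=q(1-q)$ on each branch. On each branch $|\dd r/\dd q|=|1-2q|=\sqrt{1-4r}$, and $2q_\pm(r)-1=\pm\sqrt{1-4r}$. For a Borel set $B$ this gives
\[
\omega_\Pi(B)=\int_B \bigl\{f(q_+(r))-f(q_-(r))\bigr\}\,\dd r ,
\]
because the Jacobian factor $1/\sqrt{1-4r}$ cancels the orientation weight $\pm\sqrt{1-4r}$. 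So $g(r)=f(q_+(r))-f(q_-(r))$, and this is integrable since it is bounded in absolute value by the pushforward of $f$ restricted to each branch. The point $q=1/2$ is Lebesgue-null, so excluding it does not matter.

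Substituting $\omega_\Pi(\dd r)=g(r)\,\dd r$ into the first part then gives the displayed integrals. The only step that is not bookkeeping is this change of variables and the identification of $g$. If the paper had instead defined $g$ directly as the density of $\omega_\Pi$, the density statement would be immediate from the first part.
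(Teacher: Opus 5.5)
Your proposal is correct and matches the paper's proof: the moment identities are obtained by rearranging \eqref{eq:v0-omega} and \eqref{eq:delta-omega} with $n=k-1$. For the density version, the paper simply cites Theorem~\ref{thm:density-rep}; your branch-wise change of variables, in which the Jacobian $1/\sqrt{1-4r}$ cancels the orientation $\pm\sqrt{1-4r}$, is exactly the argument the paper uses to prove that theorem.
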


The proof is in Appendix~\ref{app:branch-proofs}.

\begin{remark}
The full vote curve identifies the signed difference between the two branches of $q(1-q)$, while branch-symmetric perturbations of the latent law can preserve every $V_n$.  The resulting nonidentifiability is structural.
\end{remark}

Not every finite signed measure on $[0,1/4]$ can be a vote signature; the only obstruction is the threshold singularity.

\begin{proposition}
\label{prop:signed-signature-realizability}
Let $\mu$ be a finite signed Borel measure on $[0,1/4]$.  There exists a probability law $\Pi$ on $[0,1]$ such that $\omega_\Pi=\mu$ if and only if
\begin{equation}
    \mu(\{1/4\})=0,
    \qquad
    \int_{[0,1/4)}\frac{1}{\sqrt{1-4r}}\,|\mu|(\dd r)\le 1.
    \label{eq:signed-realizability}
\end{equation}
The remaining mass, if any, is precisely branch-symmetric slack and can be placed at $q=1/2$ without changing $\omega_\Pi$.
\end{proposition}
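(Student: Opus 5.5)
Necessity and sufficiency are handled separately, both through the change of variables $r=q(1-q)$, under which each $r\in[0,1/4)$ has the two preimages $q_\pm(r)=(1\pm\sqrt{1-4r})/2$, with $2q_\pm-1=\pm\sqrt{1-4r}$.

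\emph{Necessity.} Suppose $\omega_\Pi=\mu$.
\begin{itemize}[leftmargin=1.4em,itemsep=0.2em]
    \item Since $r=1/4$ corresponds only to $q=1/2$, where $2q-1=0$, we get $\mu(\{1/4\})=0$.
    \item Push $\Pi$ forward under $q\mapsto q(1-q)$, restricted to each branch $q\ge1/2$ and $q<1/2$. This gives two nonnegative measures $\nu_+,\nu_-$ on $[0,1/4]$ with $\nu_+([0,1/4])+\nu_-([0,1/4])\le 1$.
    \item By Definition~\ref{def:omega}, on $[0,1/4)$ we have $\mu=\sqrt{1-4r}\,(\nu_+-\nu_-)$.
    \item Hence $|\mu|\le \sqrt{1-4r}\,(\nu_++\nu_-)$ as measures on $[0,1/4)$, using $|\nu_+-\nu_-|\le\nu_++\nu_-$ (the total variation of a difference is at most the sum).
    \item Dividing by $\sqrt{1-4r}>0$ and integrating gives $\int_{[0,1/4)}(1-4r)^{-1/2}\,|\mu|(\dd r)\le(\nu_++\nu_-)([0,1/4))\le1$.
\end{itemize}

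\emph{Sufficiency.} Take the Jordan decomposition $\mu=\mu^+-\mu^-$ on $[0,1/4)$, with $\mu^\pm$ mutually singular.
\begin{itemize}[leftmargin=1.4em,itemsep=0.2em]
    \item Define $\nu_\pm(\dd r)=(1-4r)^{-1/2}\mu^\pm(\dd r)$. These are nonnegative, and since $|\mu|=\mu^++\mu^-$, condition \eqref{eq:signed-realizability} says exactly that their total mass $m$ satisfies $m\le1$.
    \item Set $\Pi=(q_+)_\#\nu_++(q_-)_\#\nu_-+(1-m)\delta_{1/2}$. This is a probability law on $[0,1]$.
    \item Compute $\omega_\Pi(B)$ for Borel $B\subseteq[0,1/4]$. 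The atom at $q=1/2$ contributes $0$ because $2q-1=0$ there. The upper branch contributes $\int\1\{r\in B\}\sqrt{1-4r}\,\nu_+(\dd r)=\mu^+(B\cap[0,1/4))$, and the lower branch contributes $-\mu^-(B\cap[0,1/4))$.
    \item Together with $\mu(\{1/4\})=0$, this gives $\omega_\Pi=\mu$.
    \item The slack $1-m$ sits at $q=1/2$ and is invisible to $\omega_\Pi$. This proves the final sentence of the proposition, and the same reasoning applies to any branch-symmetric redistribution.
\end{itemize}

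\emph{Main obstacle.} The delicate step is the measure-theoretic bookkeeping near $r=1/4$. The density $(1-4r)^{-1/2}$ is unbounded there, so the inequality must be stated on $[0,1/4)$ rather than $[0,1/4]$, and one must check that the pushforwards are Borel and that the change of variables is valid. Both follow because $q_\pm$ are continuous injections on $[0,1/4)$, so the pushforward integrals transform directly.
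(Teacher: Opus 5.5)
Your proposal is correct and follows the paper's proof. The sufficiency step is the same construction: take the Jordan decomposition, push $(1-4r)^{-1/2}\mu^\pm$ onto the upper and lower branches $q_\pm$, and place the slack at $q=1/2$. Your necessity bound $|\mu|\le\sqrt{1-4r}\,(\nu_++\nu_-)$ on $[0,1/4)$ is an explicit branch-by-branch form of the paper's domination $|\omega_\Pi|\le T_\#(|2q-1|\Pi)$.
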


The signed signature also gives the asymptotic endpoint once the full curve is known.  Let
\[
    V_\infty(\Pi)
    =\Pi((1/2,1])+\frac12\Pi(\{1/2\})
\]
be the actual limiting accuracy under odd majority votes.

\begin{proposition}
\label{prop:endpoint-omega}
For every latent law $\Pi$,
\[
    V_\infty(\Pi)
    =\frac12+\frac12\int_{[0,1/4)}\frac{1}{\sqrt{1-4r}}\,\omega_\Pi(\dd r),
\]
where the integral is well defined as an absolutely integrable signed-measure integral.  Thus the full odd-budget curve determines the endpoint, although any finite set of budgets generally does not.
\end{proposition}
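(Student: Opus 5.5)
The idea is to pull the integral back from the signature to the latent law, where the factor $(2q-1)$ nearly cancels the kernel $1/\sqrt{1-4r}$. For $q\in[0,1]$ with $r=q(1-q)$ we have $1-4r=(2q-1)^2$, so $\sqrt{1-4r}=|2q-1|$. By Definition~\ref{def:omega}, $\omega_\Pi$ is the pushforward of the finite signed measure $(2q-1)\Pi(\dd q)$ under $q\mapsto q(1-q)$. Hence for any nonnegative Borel $h$ on $[0,1/4)$, the change-of-variables formula for pushforwards gives
\[
\int_{[0,1/4)} h(r)\,|\omega_\Pi|(\dd r)\le\int_{\{q\neq1/2\}} h(q(1-q))\,|2q-1|\,\Pi(\dd q).
\]
The restriction to $[0,1/4)$ corresponds exactly to excluding $q=1/2$, the only preimage of $r=1/4$.

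\textbf{Absolute integrability.} Take $h(r)=1/\sqrt{1-4r}$. The integrand on the right becomes $|2q-1|/|2q-1|=1$ on $\{q\neq1/2\}$, so
\[
\int_{[0,1/4)}\frac{|\omega_\Pi|(\dd r)}{\sqrt{1-4r}}\le\Pi(\{q\neq1/2\})\le1.
\]
Thus the integral in the statement is an absolutely convergent signed-measure integral, which settles well-definedness.

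\textbf{Evaluating the integral.} With absolute integrability in hand, apply the pushforward identity to the signed integrand:
\[
\int_{[0,1/4)}\frac{\omega_\Pi(\dd r)}{\sqrt{1-4r}}=\int_{\{q\neq1/2\}}\frac{2q-1}{|2q-1|}\,\Pi(\dd q)=\Pi((1/2,1])-\Pi([0,1/2)).
\]
Write $\Pi([0,1/2))=1-\Pi((1/2,1])-\Pi(\{1/2\})$. Then
\[
\frac12+\frac12\bigl(2\Pi((1/2,1])+\Pi(\{1/2\})-1\bigr)=\Pi((1/2,1])+\tfrac12\Pi(\{1/2\})=V_\infty(\Pi).
\]
For completeness I would note that $V_\infty(\Pi)$ is the limit of $V_n(\Pi)$. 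Indeed $P_n(q)\to1$, $0$, or $1/2$ according as $q>1/2$, $q<1/2$, or $q=1/2$; the last case holds because $P_n(1/2)=1/2$ by the symmetry of the binomial. Dominated convergence then gives $V_n\to V_\infty$.

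\textbf{Consequences.} By Theorem~\ref{thm:omega-moments}, the full curve determines $\omega_\Pi$ uniquely, and therefore determines $V_\infty$ through the formula just proved. For the claim that finitely many budgets do not suffice, I would exhibit two laws agreeing on $V_0,\dots,V_N$ but with different endpoints. One route is a single signed perturbation of $\omega$ that annihilates $1,r,\dots,r^N$ but has nonzero integral against $1/\sqrt{1-4r}$. Such a perturbation exists because $1/\sqrt{1-4r}$ is not a polynomial on $[0,1/4)$. I would then realize both signatures via Proposition~\ref{prop:signed-signature-realizability} after scaling the perturbation small and adding it to a base signature with slack. A simpler alternative is the fixed-$q$ example: a law at $q$ versus at $1-q$ mixed suitably.

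\textbf{Main obstacle.} The only delicate point is handling the singularity at $r=1/4$ carefully. One must separate the atom $q=1/2$, which contributes zero to $\omega_\Pi$ and is excluded from $[0,1/4)$, and justify the pushforward change of variables for the signed measure. The latter is handled by splitting $\omega_\Pi$ into the pushforwards of $(2q-1)_+\Pi$ and $(2q-1)_-\Pi$, each finite and positive.
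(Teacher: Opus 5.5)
Your proposal is correct and follows the paper's proof essentially step for step. You dominate $|\omega_\Pi|$ by the pushforward of $|2q-1|\,\Pi$, use $\sqrt{1-4q(1-q)}=|2q-1|$ to get absolute integrability with bound $\Pi(\{q\neq 1/2\})\le 1$, and evaluate the signed integral as $\Pi((1/2,1])-\Pi([0,1/2))$. Your additions go beyond what the paper proves and are sound: the dominated-convergence check that $V_n\to V_\infty$, and the finite-budget non-identification via a perturbation that annihilates $1,r,\dots,r^N$ but not $(1-4r)^{-1/2}$, realized through Proposition~\ref{prop:signed-signature-realizability} around a base law with slack such as $\delta_{1/2}$. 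The ``fixed-$q$'' alternative, however, is too vague to count, since $\delta_q$ and $\delta_{1-q}$ already differ at $V_0$.
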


Appendix~\ref{app:binary-proofs} proves Propositions~\ref{prop:signed-signature-realizability} and~\ref{prop:endpoint-omega}.  The endpoint formula has a useful interpretation: finite-budget increments integrate powers of $r$, while the infinite-budget limit uses the singular kernel $(1-4r)^{-1/2}$ and emphasizes mass near $q=1/2$.  The same signature controls finite-budget shape and the eventual limit.  We next write this quotient in density coordinates.

\subsection{Branch asymmetry and realizability}
\label{sec:branch}

Assume now that $Q$ has density $f$ on $[0,1]$.  Let
\[
    q_\pm(r)=\frac{1\pm\sqrt{1-4r}}{2},
    \qquad
    g(r)=f(q_+(r))-f(q_-(r)),
    \qquad
    s(r)=f(q_+(r))+f(q_-(r)).
\]
Here $q_+$ and $q_-$ are the upper and lower branch values, and $g$ and $s$ are the branch asymmetry and branch-symmetric part, respectively.

The next result is the density form of the signed voting signature.

\begin{theorem}
\label{thm:density-rep}
If $Q$ has density $f$, then $\omega_\Pi$ is absolutely continuous and
\(\omega_\Pi(\dd r)=g(r)\dd r\).  Consequently, the increment formula becomes
\(V_{n+1}-V_n=\binom{2n+1}{n+1}\int_0^{1/4}r^{n+1}g(r)\dd r\).
Moreover, with
\[
    \psi_n(r)=\frac{2P_n(q_+(r))-1}{\sqrt{1-4r}},
    \qquad r\in[0,1/4),
\]
where the value at $r=1/4$ may be chosen arbitrarily, since it does not affect the integral, we have the level formula
\begin{equation}
    V_n=\frac12+\frac12\int_0^{1/4}\psi_n(r)g(r)\,\dd r.
    \label{eq:g-level}
\end{equation}
\end{theorem}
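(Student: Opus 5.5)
The plan is a change of variables on each branch of $r=q(1-q)$, followed by an appeal to Theorem~\ref{thm:omega-moments} and a symmetry identity for $P_n$.

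\emph{Density of $\omega_\Pi$.} Split $[0,1]$ into $[0,1/2)$ and $(1/2,1]$; the point $q=1/2$ has Lebesgue measure zero. On each piece, $q\mapsto r=q(1-q)$ is a bijection onto $[0,1/4)$ with inverse $q_\pm(r)$. The derivative is $|dq_\pm/dr|=1/\sqrt{1-4r}$, and on the upper branch $2q_+(r)-1=\sqrt{1-4r}$, while on the lower branch $2q_-(r)-1=-\sqrt{1-4r}$. Hence, for a Borel set $B\subseteq[0,1/4]$,
\[
\omega_\Pi(B)=\int_{B\cap[0,1/4)}\Bigl[f(q_+(r))\sqrt{1-4r}\cdot\tfrac{1}{\sqrt{1-4r}}-f(q_-(r))\sqrt{1-4r}\cdot\tfrac{1}{\sqrt{1-4r}}\Bigr]\dd r=\int_B g(r)\,\dd r .
\]
The Jacobian cancels exactly against the orientation weight, so no singularity remains. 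Integrability of $g$ follows because $\int_0^{1/4}|g|\,\dd r\le\int_0^{1/4}s\,\dd r=\int_0^1 f(q)\,|2q-1|^{-1}|2q-1|\,\dd q$. More simply, $\int_0^{1/4}f(q_\pm(r))\,\dd r=\int f(q)|2q-1|\,\dd q\le 1$. This shows $\omega_\Pi(\dd r)=g(r)\dd r$, and substituting into \eqref{eq:delta-omega} gives the increment formula at once.

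\emph{Level formula.} Work directly with $V_n=\int P_n f$, using the same substitution. By the symmetry $P_n(1-q)=1-P_n(q)$ (majority correctness with odd budget) and $q_-=1-q_+$, we have $P_n(q_-(r))=1-P_n(q_+(r))$. Then
\[
V_n=\int_0^{1/4}\frac{f(q_+)P_n(q_+)+f(q_-)\bigl(1-P_n(q_+)\bigr)}{\sqrt{1-4r}}\,\dd r .
\]
Write $P_n(q_+)=\tfrac12+\tfrac12\bigl(2P_n(q_+)-1\bigr)$. The numerator becomes $\tfrac12 s(r)+\tfrac12\bigl(2P_n(q_+)-1\bigr)g(r)$. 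The first term integrates to $\tfrac12\int_0^1 f=\tfrac12$, since $\int_0^{1/4}s(r)(1-4r)^{-1/2}\dd r$ is just the undone change of variables. The second term gives $\tfrac12\int\psi_n g$, which is \eqref{eq:g-level}. The value of $\psi_n$ at $r=1/4$ is irrelevant because it is a Lebesgue null point.

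\emph{Main obstacle.} The only delicate point is justifying the splitting of the integral into two separately finite pieces. Here $\psi_n$ looks singular at $r=1/4$. However, $2P_n(q_+)-1$ vanishes at $q_+=1/2$ with $P_n'$ bounded, and $2P_n(q)-1=O(|2q-1|)=O(\sqrt{1-4r})$, so $\psi_n$ is bounded. Combined with $g\in L^1$, this makes $\int\psi_n g$ absolutely convergent. The $s$-part is nonnegative, so Tonelli applies to it. The split is therefore legitimate.
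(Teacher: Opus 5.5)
Your proposal is correct and follows the paper's proof essentially step for step. You use the branchwise change of variables, with the Jacobian $(1-4r)^{-1/2}$ cancelling the orientation factor $\pm\sqrt{1-4r}$. You then use the symmetry $P_n(1-q)=1-P_n(q)$ to split $V_n$ into a symmetric part integrating to $1/2$ and the $\psi_n g$ part. You handle $r=1/4$ via $P_n(1/2)=1/2$: the paper computes the limit $P_n'(1/2)$, whereas you use a Lipschitz bound, and your explicit $L^1$ check on $g$ fills in a step the paper leaves implicit. One small slip: $\int_0^{1/4}s(r)\,\dd r$ equals $\int_0^1 f(q)\,|2q-1|\,\dd q$, not $\int_0^1 f(q)\,|2q-1|^{-1}|2q-1|\,\dd q$ as your first display states, though the bound $\le 1$ you need is unaffected.
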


The proof is in Appendix~\ref{app:branch-proofs}.  The Jacobian cancellation in Theorem~\ref{thm:density-rep} is the reason the branch-asymmetry density is simple: the derivative of $q_\pm(r)$ contributes $(1-4r)^{-1/2}$, while the orientation $2q-1$ contributes $\pm\sqrt{1-4r}$.

The next statement characterizes exactly which branch-asymmetry functions can occur and which nuisance mass remains free.

\begin{theorem}
\label{thm:realizability}
Let $g:[0,1/4]\to\R$ be measurable.  There exists a probability density $f$ on $[0,1]$ with branch asymmetry $g$ if and only if
\begin{equation}
    \int_0^{1/4}\frac{|g(r)|}{\sqrt{1-4r}}\,\dd r\le1.
    \label{eq:realizability}
\end{equation}
When \eqref{eq:realizability} holds, all realizing densities are obtained as follows.  Choose a measurable $h\ge0$ satisfying
\begin{equation}
    \int_0^{1/4}\frac{h(r)}{\sqrt{1-4r}}\,\dd r
    =1-\int_0^{1/4}\frac{|g(r)|}{\sqrt{1-4r}}\,\dd r,
    \label{eq:slack-h}
\end{equation}
then set
\begin{equation}
    f(q_+(r))=\frac{|g(r)|+h(r)+g(r)}{2},
    \qquad
    f(q_-(r))=\frac{|g(r)|+h(r)-g(r)}{2}
    \label{eq:realizing-density}
\end{equation}
for almost every $r\in[0,1/4]$.  The realizing density is unique up to almost-everywhere equality if and only if equality holds in \eqref{eq:realizability}; in that saturated case $f(q_+(r))=g^+(r)$ and $f(q_-(r))=g^-(r)$ almost everywhere.
\end{theorem}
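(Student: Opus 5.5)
The plan is to change variables from $q$ to the pair (variance scale $r$, branch $\pm$) and to work entirely with the branch values $f(q_\pm(r))$. For any integrable $f\ge0$ on $[0,1]$, split the integral at $q=1/2$ and substitute $q=q_\pm(r)$ on each half. Since $|q_\pm'(r)|=(1-4r)^{-1/2}$, this gives
\[
\int_0^1 f(q)\,\dd q=\int_0^{1/4}\frac{f(q_+(r))+f(q_-(r))}{\sqrt{1-4r}}\,\dd r=\int_0^{1/4}\frac{s(r)}{\sqrt{1-4r}}\,\dd r .
\]
The map $q\mapsto(r,\text{branch})$ is a bijection off the null set $\{1/2\}$. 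Hence specifying a density $f$ up to a.e.\ equality is the same as specifying the two measurable nonnegative functions $u=f\circ q_+$ and $v=f\circ q_-$ on $[0,1/4]$ up to a.e.\ equality. The constraint $\int f=1$ becomes $\int_0^{1/4}(u+v)(1-4r)^{-1/2}\,\dd r=1$, and the branch asymmetry is $g=u-v$.

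For necessity, take any realizing density. Pointwise $|g|=|u-v|\le u+v=s$. Integrating against the weight $(1-4r)^{-1/2}$ and using the identity above gives $\int|g|(1-4r)^{-1/2}\,\dd r\le\int s\,(1-4r)^{-1/2}\,\dd r=1$, which is \eqref{eq:realizability}. The same computation classifies every realizing density. Set $h=u+v-|g|\ge0$. Then $u+v=|g|+h$ and $u-v=g$, and solving gives exactly \eqref{eq:realizing-density}. The normalization $\int s(1-4r)^{-1/2}=1$ is precisely \eqref{eq:slack-h}. So every realizing density arises from some admissible $h$.

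For sufficiency, and to show the parametrization is complete, start from any measurable $h\ge0$ satisfying \eqref{eq:slack-h}. Such an $h$ exists under \eqref{eq:realizability}: for example, take $h$ equal to the nonnegative slack constant times $2$, since $\int_0^{1/4}2(1-4r)^{-1/2}\,\dd r=1$. Define $u,v$ by \eqref{eq:realizing-density}. Because $|g|\pm g\ge0$ and $h\ge0$, both are nonnegative and measurable. Next define $f$ on $(1/2,1]$ and $[0,1/2)$ via $f(q)=u(q(1-q))$ and $f(q)=v(q(1-q))$ respectively, and set $f(1/2)$ arbitrarily. Then $f$ is measurable and nonnegative, its branch asymmetry is $u-v=g$, and its total integral is $\int(|g|+h)(1-4r)^{-1/2}=1$ by \eqref{eq:slack-h}.

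For uniqueness, a realizing density is determined a.e.\ by $h$, since distinct $h$ on a positive-measure set give distinct $(u,v)$ there, hence distinct $f$ on a positive-measure set of $q$. When equality holds in \eqref{eq:realizability}, \eqref{eq:slack-h} forces $\int h(1-4r)^{-1/2}=0$, so $h=0$ a.e. Then $u=(|g|+g)/2=g^+$ and $v=g^-$, and the density is unique. When the inequality is strict, the slack is positive. We can then split it in two different ways, for instance a constant $h$ versus $h$ supported on $[0,1/8]$ with the matching integral, and these give two densities that are not a.e.\ equal.

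I expect the main obstacle to be bookkeeping about null sets and measurability rather than any real difficulty. The points to check are that the change of variables preserves Lebesgue-null sets in both directions (since $q_\pm$ are $C^1$ diffeomorphisms on the open intervals), so "a.e.\ in $r$" and "a.e.\ in $q$" match. One also has to treat the endpoint $r=1/4$, where the Jacobian blows up, as a single null point.
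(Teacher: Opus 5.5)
Your proposal is correct and follows essentially the same route as the paper's proof: the change of variables $\int_0^1 f=\int_0^{1/4}s(r)(1-4r)^{-1/2}\dd r$, the pointwise bound $|g|\le s$ for necessity, the parametrization of all realizing densities via $h=s-|g|$, and zero slack forcing $h=0$ a.e.\ in the saturated case. You also supply details the paper leaves implicit, all of which check out: an explicit admissible slack (twice the slack constant, using $\int_0^{1/4}2(1-4r)^{-1/2}\dd r=1$), an explicit pair of distinct slack functions in the strict case, and the null-set and measurability bookkeeping for the branch maps $q_\pm$.
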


Appendix~\ref{app:branch-proofs} gives the proof.  The theorem separates the part seen by voting from the nuisance part.  The slack $h$ is exactly the branch-symmetric mass that can be added without changing any majority-vote accuracy.

\begin{example}
\label{ex:same-g}
Define
\[
    f_1(q)=1+\frac14(2q-1),
    \qquad
    f_2(q)=1+\frac14(2q-1)+4\left(q(1-q)-\frac16\right).
\]
Both are valid densities on $[0,1]$.  Their branch asymmetries coincide:
\[
    f_1(q_+(r))-f_1(q_-(r))
    =f_2(q_+(r))-f_2(q_-(r))
    =\frac12\sqrt{1-4r}.
\]
Therefore $f_1$ and $f_2$ induce the same entire odd-budget vote curve, although their latent laws differ.
\end{example}

\section{Shape calculus and endpoint behavior}
\label{sec:variation-shape}

The signed voting signature turns global questions about the vote curve into moment and tail questions on $[0,1/4]$.  The left endpoint $r=0$ corresponds to latent probabilities near $0$ or $1$, where repeated voting stabilizes quickly.  The right endpoint $r=1/4$ corresponds to $q\approx1/2$, where majority decisions stabilize slowly and endpoint extrapolation becomes hard.

\subsection{\texorpdfstring{Budget-to-budget variation and the role of $r=0$}{Budget-to-budget variation and the role of r=0}}

Write $\|\omega_\Pi\|_{\mathrm{TV}}$ for the total variation norm of the signed voting signature.  Since $|2q-1|\le1$, always $\|\omega_\Pi\|_{\mathrm{TV}}\le1$.  The first consequence is a distribution-free bound on finite-budget movement.

\begin{proposition}
\label{prop:universal-variation}
For integers $m>n\ge0$,
\begin{equation}
    |V_m(\Pi)-V_n(\Pi)|
    \le
    \|\omega_\Pi\|_{\mathrm{TV}}
    \sum_{k=n}^{m-1}\binom{2k+1}{k+1}4^{-(k+1)}.
    \label{eq:variation-bound-sum}
\end{equation}
Consequently,
\[
    |V_m(\Pi)-V_n(\Pi)|
    \le
    \min\!\left\{1,\frac{\|\omega_\Pi\|_{\mathrm{TV}}}{\sqrt{\pi}}(\sqrt{m}-\sqrt{n})\right\}.
\]
\end{proposition}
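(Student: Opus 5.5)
We telescope the curve using the increment identity \eqref{eq:delta-omega} of Theorem~\ref{thm:omega-moments}, bound each increment in absolute value, and then estimate the resulting sum of central binomial terms. For $m>n\ge0$ we write
\[
V_m(\Pi)-V_n(\Pi)=\sum_{k=n}^{m-1}\bigl(V_{k+1}(\Pi)-V_k(\Pi)\bigr)
=\sum_{k=n}^{m-1}\binom{2k+1}{k+1}\int_{[0,1/4]} r^{k+1}\,\omega_\Pi(\dd r).
\]
Since $0\le r^{k+1}\le 4^{-(k+1)}$ on $[0,1/4]$, each integral is at most $4^{-(k+1)}|\omega_\Pi|([0,1/4])=4^{-(k+1)}\|\omega_\Pi\|_{\mathrm{TV}}$ in absolute value. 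The triangle inequality then gives \eqref{eq:variation-bound-sum} directly.

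For the consequence, the bound by $1$ is trivial because each $V_j(\Pi)\in[0,1]$. For the square-root bound we set $c_k=\binom{2k+1}{k+1}4^{-(k+1)}=\tfrac12\binom{2k+2}{k+1}4^{-(k+1)}$, using $\binom{2k+1}{k+1}=\tfrac12\binom{2k+2}{k+1}$. The Wallis-type inequality $\binom{2j}{j}4^{-j}\le 1/\sqrt{\pi j}$ for $j\ge1$ then yields
\[
c_k\le \frac{1}{2\sqrt{\pi(k+1)}}.
\]
We compare this with $\sqrt{k+1}-\sqrt{k}=1/(\sqrt{k+1}+\sqrt{k})\ge 1/(2\sqrt{k+1})$, which gives $c_k\le \pi^{-1/2}(\sqrt{k+1}-\sqrt k)$. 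Summing from $k=n$ to $m-1$ telescopes to $\pi^{-1/2}(\sqrt m-\sqrt n)$. Multiplying by $\|\omega_\Pi\|_{\mathrm{TV}}$ and combining with the trivial bound gives the stated minimum.

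The main step to check is the central binomial inequality $\binom{2j}{j}4^{-j}\le(\pi j)^{-1/2}$. We will prove it by showing that $a_j=\sqrt{j}\,\binom{2j}{j}4^{-j}$ is increasing with limit $1/\sqrt\pi$. The ratio is
\[
\frac{a_{j+1}}{a_j}=\sqrt{\frac{j+1}{j}}\cdot\frac{2j+1}{2j+2}.
\]
Its square is $\frac{(2j+1)^2}{4j(j+1)}>1$, so the sequence increases. The limit $1/\sqrt\pi$ follows from Wallis' product or Stirling's formula. An alternative is to compare $c_k$ with $\int_k^{k+1}\tfrac{1}{2\sqrt{\pi t}}\,\dd t$, which also telescopes to the same bound; this only needs $c_k\le \tfrac{1}{2\sqrt{\pi(k+1)}}$, the same inequality.
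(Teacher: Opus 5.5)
Your proposal is correct and follows the paper's proof step for step: you telescope with the increment identity \eqref{eq:delta-omega} and bound each term by $\|\omega_\Pi\|_{\mathrm{TV}}\binom{2k+1}{k+1}4^{-(k+1)}$. You then use $\binom{2k+1}{k+1}=\tfrac12\binom{2k+2}{k+1}$, the central-binomial bound, and $1/(2\sqrt{k+1})\le\sqrt{k+1}-\sqrt{k}$ to telescope, exactly as the paper does. The only addition is that you prove $\binom{2j}{j}4^{-j}\le(\pi j)^{-1/2}$ via monotonicity of $\sqrt{j}\binom{2j}{j}4^{-j}$ (your ratio computation is right), whereas the paper simply cites this bound as standard.
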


This bound is fully distribution-free.  It is useful as a sanity check, but it is deliberately conservative because it allows signed mass arbitrarily close to $r=1/4$.

\begin{corollary}
\label{cor:near-zero}
If $\supp(\omega_\Pi)\subseteq[0,a]$ for some $a<1/4$, then for every $m>n\ge0$,
\begin{equation}
    |V_m(\Pi)-V_n(\Pi)|
    \le
    \|\omega_\Pi\|_{\mathrm{TV}}
    \sum_{k=n}^{m-1}\binom{2k+1}{k+1}a^{k+1}
    \le
    \frac{2a}{1-4a}\,\|\omega_\Pi\|_{\mathrm{TV}}(4a)^n.
    \label{eq:near-zero-bound}
\end{equation}
Letting $m\to\infty$ gives the same exponential bound for $|\Vinf(\Pi)-V_n(\Pi)|$.
\end{corollary}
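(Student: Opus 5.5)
The plan is to start from the increment identity (\ref{eq:delta-omega}) of Theorem~\ref{thm:omega-moments}. For $m>n\ge0$, telescoping gives
\[
V_m(\Pi)-V_n(\Pi)=\sum_{k=n}^{m-1}\binom{2k+1}{k+1}\int_{[0,1/4]} r^{k+1}\,\omega_\Pi(\dd r).
\]
The support hypothesis $\supp(\omega_\Pi)\subseteq[0,a]$ lets each integral be taken over $[0,a]$ only. There $0\le r^{k+1}\le a^{k+1}$, so
\[
\Bigl|\int r^{k+1}\,\omega_\Pi(\dd r)\Bigr|\le a^{k+1}\,|\omega_\Pi|([0,a])=a^{k+1}\|\omega_\Pi\|_{\mathrm{TV}}.
\]
Summing these bounds with the triangle inequality yields the first inequality in (\ref{eq:near-zero-bound}).

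For the second inequality I will bound the central binomial coefficient crudely. Since $\binom{2k+1}{k+1}$ is one of the two equal middle terms of a row summing to $2^{2k+1}$, we have $\binom{2k+1}{k+1}\le \tfrac12 4^{k+1}$. Then
\[
\binom{2k+1}{k+1}a^{k+1}\le \tfrac12(4a)^{k+1}.
\]
Because $a<1/4$ gives $4a<1$, the sum over $k\ge n$ is a geometric series bounded by
\[
\tfrac12(4a)^{n+1}/(1-4a)=\frac{2a}{1-4a}(4a)^n,
\]
which matches the stated constant exactly. Extending the sum to infinity only enlarges it, so the bound holds uniformly in $m$.

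For the limit statement, fix $n$ and let $m\to\infty$. The right-hand side does not depend on $m$, so it suffices to know $V_m(\Pi)\to\Vinf(\Pi)$. This follows by dominated convergence, since $0\le P_m\le1$ and $P_m(q)\to \1\{q>1/2\}+\tfrac12\1\{q=1/2\}$ pointwise. The case $q=1/2$ is exact because $P_m(1/2)=1/2$ by Proposition~\ref{prop:pointwise-diff}, and $q\ne1/2$ follows from the law of large numbers. Alternatively, the partial sums converge absolutely by the geometric bound, and Proposition~\ref{prop:endpoint-omega} identifies the limit. Passing to the limit in the inequality then gives $|\Vinf(\Pi)-V_n(\Pi)|\le \frac{2a}{1-4a}\|\omega_\Pi\|_{\mathrm{TV}}(4a)^n$.

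There is no real obstacle here. The only points needing care are interpreting the support condition so that the integrals genuinely live on $[0,a]$ (the total variation norm is just $|\omega_\Pi|([0,a])$), and checking that the constant $2a/(1-4a)$ comes out of the bound $\binom{2k+1}{k+1}\le 4^{k+1}/2$ rather than a looser one.
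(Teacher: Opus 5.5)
Your proof is correct and follows the paper's argument: telescope the increment identity \eqref{eq:delta-omega}, bound each term by $a^{k+1}\|\omega_\Pi\|_{\mathrm{TV}}$ using the support condition, apply $\binom{2k+1}{k+1}\le 2\cdot 4^k$ and sum the geometric series, then pass to $m\to\infty$ by dominated convergence. One minor aside: your middle-term argument ($2\binom{2k+1}{k+1}\le 2^{2k+1}$) actually yields the sharper bound $\binom{2k+1}{k+1}\le 4^k$, so $4^{k+1}/2$ is the looser of the two and the constant could be halved to $a/(1-4a)$; the stated bound is of course still valid.
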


Appendix~\ref{app:branch-proofs} proves Proposition~\ref{prop:universal-variation} and Corollary~\ref{cor:near-zero}.  Thus finite-budget movement is small when the signature lives near $r=0$.  This covers tasks dominated by very easy or systematically wrong examples: voting reaches its endpoint quickly, though the endpoint may be high or low depending on orientation.

\subsection{\texorpdfstring{Endpoint convergence and the role of $r=1/4$}{Endpoint convergence and the role of r=1/4}}

The other endpoint is more delicate.  Since $\frac14-Q(1-Q)=(Q-\frac12)^2$, large budgets are slowed by latent mass near $q=1/2$, equivalently near $r=1/4$.  The following bridge translates threshold-margin mass into endpoint rates.

\begin{theorem}
\label{thm:threshold-bridge}
Let $M=2n+1$ be the number of votes.  Suppose that, for constants $C,\kappa>0$ and $t_0\in(0,1/2]$,
\begin{equation}
    \Pi\!\left(\left|Q-\frac12\right|\le t\right)\le C t^\kappa,
    \qquad 0<t<t_0.
    \label{eq:threshold-margin}
\end{equation}
Equivalently, with $R=Q(1-Q)$, the mass near $r=1/4$ satisfies
\[
    \Pi\!\left(\frac14-R\le s\right)\le C s^{\kappa/2},
    \qquad 0<s<t_0^2.
\]
Then
\begin{equation}
    |V_n(\Pi)-\Vinf(\Pi)|
    \le
    e^{-2Mt_0^2}+C\,\Gamma\!\left(1+\frac{\kappa}{2}\right)(2M)^{-\kappa/2}.
    \label{eq:threshold-bridge}
\end{equation}
If there is a gap $\delta>0$ with $\Pi(|Q-1/2|\le\delta)=0$, then
\begin{equation}
    |V_n(\Pi)-\Vinf(\Pi)|\le e^{-2M\delta^2}.
    \label{eq:gap-bridge}
\end{equation}
If $Q$ has density bounded by $B$ on $[0,1]$, then
\begin{equation}
    |V_n(\Pi)-\Vinf(\Pi)|\le e^{-M/2}+B\sqrt{\frac{\pi}{2M}}.
    \label{eq:density-bridge}
\end{equation}
\end{theorem}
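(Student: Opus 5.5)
The plan is a pointwise comparison followed by integration against $\Pi$. For a fixed $q$, the limit of $P_n(q)$ as $n\to\infty$ is $P_\infty(q)=\1\{q>1/2\}+\frac12\1\{q=1/2\}$, so $\Vinf(\Pi)=\int P_\infty\,\dd\Pi$. This gives
\[
|V_n(\Pi)-\Vinf(\Pi)|\le\int|P_n(q)-P_\infty(q)|\,\Pi(\dd q).
\]
At $q=1/2$ both $P_n$ and $P_\infty$ equal $1/2$, by Proposition~\ref{prop:pointwise-diff} or by symmetry, so the integrand vanishes there. For $q>1/2$ the error is $1-P_n(q)=\PP\{\Bin(M,q)\le n\}$. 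Since $n<Mq$, this is a lower deviation of at least $M(q-1/2)$ from the mean $Mq$, so Hoeffding's inequality gives $1-P_n(q)\le e^{-2M(q-1/2)^2}$. The case $q<1/2$ is symmetric. Writing $t=|q-1/2|$, we therefore have the pointwise bound $|P_n(q)-P_\infty(q)|\le \min\{1/2,e^{-2Mt^2}\}\le e^{-2Mt^2}$.

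The gap bound \eqref{eq:gap-bridge} follows at once. On the support of $\Pi$ we have $t>\delta$, so the integrand is at most $e^{-2M\delta^2}$, and $\Pi$ is a probability measure.

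For \eqref{eq:threshold-bridge}, split according to whether $T=|Q-1/2|$ is at least $t_0$. On $\{T\ge t_0\}$ the integrand is at most $e^{-2Mt_0^2}$, which contributes the first term. On $\{T<t_0\}$, use the layer-cake representation $e^{-2MT^2}=\int_T^\infty 4Mu\,e^{-2Mu^2}\,\dd u$ and apply Fubini:
\[
\E\big[e^{-2MT^2}\1\{T<t_0\}\big]\le\int_0^\infty 4Mu\,e^{-2Mu^2}\,\PP\{T<\min(u,t_0)\}\,\dd u .
\]
For $u<t_0$, the margin assumption \eqref{eq:threshold-margin} gives $\PP\{T<u\}\le Cu^\kappa$. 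For $u\ge t_0$, we have $\PP\{T<t_0\}\le \lim_{t\uparrow t_0} Ct^\kappa\le Cu^\kappa$. So the probability is bounded by $Cu^\kappa$ on the whole range. Substituting $v=2Mu^2$ turns the integral into $\int_0^\infty e^{-v}(v/2M)^{\kappa/2}\dd v=\Gamma(1+\kappa/2)(2M)^{-\kappa/2}$, which gives the second term.

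This Fubini step is where care is needed. The issue is the interval $u\ge t_0$, where the hypothesis is not stated, and it is handled by monotonicity of $t\mapsto\PP\{T<t\}$ as above. The "equivalently" reformulation follows from the identity $\frac14-R=T^2$, which makes $\{T\le t\}=\{\frac14-R\le t^2\}$, so setting $s=t^2$ gives the $s^{\kappa/2}$ form.

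For \eqref{eq:density-bridge}, take $t_0=1/2$. The first term becomes $e^{-2M/4}=e^{-M/2}$. For the second, a density bounded by $B$ gives $\PP\{T\le t\}\le 2Bt$, so the margin condition holds with $C=2B$ and $\kappa=1$. The constant is then $2B\,\Gamma(3/2)(2M)^{-1/2}=2B\frac{\sqrt\pi}{2}(2M)^{-1/2}=B\sqrt{\pi/(2M)}$, which matches the stated bound.
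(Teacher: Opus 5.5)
Your proof is correct and follows the paper's argument: the same pointwise Hoeffding bound $e^{-2M(q-1/2)^2}$, a layer-cake representation of $\E e^{-2M|Q-1/2|^2}$ bounded via the margin condition and a Gamma integral, and the density case obtained with $C=2B$, $\kappa=1$, $t_0=1/2$. The differences are cosmetic. The paper splits the layer-cake integral in $S=|Q-1/2|^2$ at $t_0^2$, so the hypothesis is never needed beyond $t_0$ and your monotonicity extension is unnecessary. It also handles $q=1/2$ by noting that the margin condition forces $\Pi(\{1/2\})=0$, whereas you use $P_n(1/2)=1/2$ directly.
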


The next result shows that the local-margin exponent is sharp in the worst case; Appendix~\ref{app:branch-proofs} proves both endpoint-rate theorems.

\begin{theorem}
\label{thm:rate-sharp}
Fix $C,\kappa>0$ and $t_0\in(0,1/2]$.  Let $\cG(C,\kappa,t_0)$ be the class of laws satisfying \eqref{eq:threshold-margin}.  There is a constant $c_*(C,\kappa,t_0)>0$ such that, for all sufficiently large odd $M=2n+1$,
\[
    \sup_{\Pi\in\cG(C,\kappa,t_0)} |V_n(\Pi)-\Vinf(\Pi)|
    \ge c_*(C,\kappa,t_0)M^{-\kappa/2}.
\]
Consequently, the exponent $-\kappa/2$ in Theorem~\ref{thm:threshold-bridge} cannot be uniformly improved over the local margin class.
\end{theorem}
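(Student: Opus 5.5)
The lower bound comes from a single explicit law inside $\cG(C,\kappa,t_0)$ whose mass sits at threshold distance of order $M^{-1/2}$, placed entirely on one side of $1/2$. For large odd $M=2n+1$ set $t_M=M^{-1/2}$ and take
\[
\Pi_M=p_M\,\delta_{1/2+t_M}+(1-p_M)\,\delta_{1},\qquad p_M=C' t_M^{\kappa},
\]
with $C'=\min\{C,\,1\}$. Both atoms lie in $(1/2,1]$, so $\Vinf(\Pi_M)=1$ and
\[
|V_n(\Pi_M)-\Vinf(\Pi_M)|=p_M\bigl(1-P_n(1/2+t_M)\bigr).
\]

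The first step is to check membership in the class. For $t<t_M$ the mass near threshold is $0$. For $t_M\le t<t_0$ it is at most $p_M=C't_M^\kappa\le Ct^\kappa$. The atom at $1$ never enters once $t_0\le 1/2$: if $t_0=1/2$ the condition only involves $t<1/2$, and $|1-1/2|=1/2$ is not counted. We also need $p_M\le 1$, which holds for large $M$.

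The second step, which is the main quantitative point, is a uniform lower bound
\[
1-P_n\bigl(1/2+M^{-1/2}\bigr)\ge c_0>0
\]
for all large $M$. I would get it from the central limit theorem or Berry--Esseen. The variable $\Bin(M,1/2+t_M)$ has mean $M/2+\sqrt M$ and standard deviation about $\sqrt M/2$. The event "minority is correct" is $\{\Bin\le n\}$, which sits about two standard deviations below the mean, so its probability tends to $\Phi(-2)>0$.

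If a fully elementary argument is preferred, one can instead use the exact identity from Proposition~\ref{prop:pointwise-diff}. Summing the increments,
\[
1-P_n(q)=\sum_{k\ge n}\binom{2k+1}{k+1}(q(1-q))^{k+1}(2q-1),
\]
and the standard estimate $\binom{2k+1}{k+1}4^{-(k+1)}\asymp k^{-1/2}$ applies. With $q(1-q)=1/4-t_M^2$ and $2q-1=2t_M$, the sum is of order
\[
t_M\sum_{k\ge n}k^{-1/2}e^{-4kt_M^2}\asymp t_M\cdot t_M^{-1}=\Theta(1).
\]
This sum estimate is the step most likely to need care, but either route gives a constant $c_0$.

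Combining the two steps gives
\[
\sup_{\Pi\in\cG(C,\kappa,t_0)}|V_n(\Pi)-\Vinf(\Pi)|\ge c_0\,C'\,M^{-\kappa/2}.
\]
So we may take $c_*=c_0\min\{C,1\}$ for all sufficiently large $M$, where "sufficiently large" means $t_M<t_0$ and $p_M\le 1$. The "consequently" clause follows by comparing with \eqref{eq:threshold-bridge}: its first term is exponentially small, so the upper bound is $O(M^{-\kappa/2})$ and the matching lower bound shows the exponent $-\kappa/2$ cannot be improved uniformly over the class.
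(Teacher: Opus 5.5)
Your proof is correct, but it follows a different route from the paper's.

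The paper uses one law that does not depend on $M$. It puts mass $1-a$ at $q=1$ and mass $a$ at $1/2+X$, where $\PP(X\le t)=(t/t_0)^\kappa$ on $(0,t_0]$ and $a=\min\{1/2,Ct_0^\kappa/2\}$. It then bounds $1-V_n$ from below by restricting to the event $X\le 1/(4\sqrt M)$. This step uses two ingredients: majority error is monotone in $q$ on $[1/2,1]$, and a CLT lemma at $q_M=1/2+1/(4\sqrt M)$.

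You instead let the law depend on $M$. You place a single atom carrying the largest mass the class allows, $\min\{C,1\}M^{-\kappa/2}$, exactly at the critical scale $1/2+M^{-1/2}$. This is legitimate, since the supremum in the statement is taken separately for each $M$.

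Your route is simpler. It needs no monotonicity in $q$ and no power-law mixing distribution. It also gives a constant $c_0\min\{C,1\}$ that does not depend on $t_0$ or $\kappa$.

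The paper's route buys a stronger conclusion than the minimax bound you were asked to prove. A single fixed law in $\cG(C,\kappa,t_0)$ satisfies $|V_n-\Vinf|\gtrsim M^{-\kappa/2}$ for all large $M$. So the rate in Theorem~\ref{thm:threshold-bridge} is sharp even for an individual law, not just uniformly over the class.

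Your elementary alternative, summing the increments from Proposition~\ref{prop:pointwise-diff}, is also sound. With $n\approx M/2$ and $t_M^2=1/M$, the factor $(1-4t_M^2)^{k+1}$ stays bounded below for $n\le k\le 2n$. Combined with $\binom{2k+1}{k+1}4^{-(k+1)}\gtrsim k^{-1/2}$, that block of terms alone already gives a positive constant.
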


The local bridge gives the complementary rate picture to Corollary~\ref{cor:near-zero}.  Concentration near $r=0$ makes the whole curve move little after modest budgets.  Mass near $r=1/4$ controls how slowly the curve approaches the endpoint.  A related bridge appears in the two-call analysis of \citet{liu2026twocalls}; here it completes the signature rate picture, and Theorem~\ref{thm:rate-sharp} shows that the local-margin exponent is the correct worst-case rate.

The sign pattern of the curve is governed by the signed measure.  Write $\Delta_n=V_{n+1}-V_n$.

\begin{proposition}
\label{prop:shape-criterion}
If $\omega_\Pi$ is a nonnegative measure on $[0,1/4]$, then $n\mapsto V_n(\Pi)$ is nondecreasing and discretely concave.  If $-\omega_\Pi$ is nonnegative, then $n\mapsto V_n(\Pi)$ is nonincreasing and discretely convex.
\end{proposition}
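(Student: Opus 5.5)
The plan is to read everything off the increment formula \eqref{eq:delta-omega} of Theorem~\ref{thm:omega-moments}. Write
\[
    \Delta_n=V_{n+1}(\Pi)-V_n(\Pi)=c_n m_{n+1},
    \qquad
    c_n=\binom{2n+1}{n+1}>0,
    \qquad
    m_k=\int_{[0,1/4]} r^k\,\omega_\Pi(\dd r).
\]
If $\omega_\Pi\ge0$, then every $m_k\ge0$, so $\Delta_n\ge0$ for all $n$ and the curve is nondecreasing. For discrete concavity I need $\Delta_{n+1}\le\Delta_n$. Rather than comparing moments, I would work pointwise. By Proposition~\ref{prop:pointwise-diff},
\[
    \Delta_n=\int_{[0,1]}\binom{2n+1}{n+1}q^{n+1}(1-q)^{n+1}(2q-1)\,\Pi(\dd q)
    =\int_{[0,1]} d_n(q)\,\Pi(\dd q),
\]
where $d_n(q)=P_{n+1}(q)-P_n(q)$. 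Pushing forward through $r=q(1-q)$, this equals $\int_{[0,1/4]}\phi_n(r)\,\omega_\Pi(\dd r)$ with $\phi_n(r)=c_n r^{n+1}$, which is exactly \eqref{eq:delta-omega}. So it suffices to show that $\phi_{n+1}(r)\le\phi_n(r)$ for $r\in[0,1/4]$ and then integrate against the nonnegative measure $\omega_\Pi$.

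The pointwise inequality is the only real step. We have
\[
    \frac{\phi_{n+1}(r)}{\phi_n(r)}=\frac{c_{n+1}}{c_n}\,r,
    \qquad
    \frac{c_{n+1}}{c_n}=\frac{\binom{2n+3}{n+2}}{\binom{2n+1}{n+1}}=\frac{(2n+3)(2n+2)}{(n+2)(n+1)}=\frac{2(2n+3)}{n+2}<4.
\]
Hence for $r\le1/4$ the ratio is at most $\frac{2n+3}{2(n+2)}<1$, and $\phi_{n+1}\le\phi_n$ on $[0,1/4]$, with both vanishing at $r=0$. Integrating against $\omega_\Pi\ge0$ gives $\Delta_{n+1}\le\Delta_n$, which is discrete concavity. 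This is the same mechanism behind the fixed-$q$ concavity claim in Proposition~\ref{prop:pointwise-diff}, now applied at the level of the kernel rather than at a single point.

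For the second statement I would not repeat the argument. The map $\Pi\mapsto\omega_\Pi$ is linear, so \eqref{eq:v0-omega} and \eqref{eq:delta-omega} give $\Delta_n=-\int\phi_n\,\dd(-\omega_\Pi)$. With $-\omega_\Pi\ge0$, the same pointwise bound $\phi_{n+1}\le\phi_n$ yields $\Delta_n\le0$ and $\Delta_{n+1}\ge\Delta_n$, that is, nonincreasing and discretely convex. Alternatively, one could apply the first case to the reflected law of $1-Q$, whose signature is $-\omega_\Pi$, using $V_n(\text{reflected})=1-V_n(\Pi)$. I expect no genuine obstacle. The one point needing care is that concavity requires the kernel comparison to hold uniformly on $[0,1/4]$ rather than only at a point, and the ratio bound $c_{n+1}/c_n<4$ above supplies exactly this.
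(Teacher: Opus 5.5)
Your proof is correct and matches the paper's argument. Both read monotonicity off the increment formula \eqref{eq:delta-omega}, get concavity from the pointwise bound $\frac{\binom{2n+3}{n+2}}{\binom{2n+1}{n+1}}\,r\le\frac{2n+3}{2n+4}<1$ on $[0,1/4]$ integrated against the nonnegative signature, and handle the case $-\omega_\Pi\ge0$ by reversing signs; your detour through Proposition~\ref{prop:pointwise-diff} and the reflected-law alternative are valid but unnecessary.
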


Appendix~\ref{app:branch-proofs} gives the proof.  For densities, nonnegative $\omega_\Pi$ means $f(q_+(r))\ge f(q_-(r))$ for almost every branch pair.  This reflected dominance gives monotone voting; sign changes across radii allow the behaviors in Figure~\ref{fig:shape-examples} and Example~\ref{ex:infinite-oscillation}.

\section{Estimating the signed voting signature}
\label{sec:finite-repeat-access}

The signed voting signature is a structural invariant of the population curve.  How it can be estimated depends on what the repeated-prediction interface exposes.  In the most informative case, a model-specific validation procedure supplies an estimate of the latent correctness probability for each example.  This could happen because the system exposes calibrated per-example probabilities, because a verifier provides many repeated labeled outcomes for the same item, or because a domain-specific simulator makes the conditional success rate observable.  If estimates $\widehat q_i$ are accurate, a direct plug-in estimator is the empirical signed pushforward \(\widehat\omega=N^{-1}\sum_{i=1}^{N}(2\widehat q_i-1)\delta_{\widehat q_i(1-\widehat q_i)}\).  This estimator targets the whole signature, and standard empirical-measure reasoning applies once the first-stage errors $\widehat q_i-q_i$ are controlled.  Density estimation or smoothing of the branch-asymmetry function is then a modeling choice after identification.

The more common black-box validation setting is weaker.  Each labeled example is queried only $J$ times, and one observes the grouped correctness count
\[
    C_i=\sum_{j=1}^{J}B_{ij},
    \qquad C_i\mid Q_i\sim\Bin(J,Q_i).
\]
The distribution of $C_i$ identifies the raw moments of $Q$ only up to degree $J$.  Since the signed signature moments are
\[
    s_k=\int_{[0,1/4]} r^k\,\omega_\Pi(\dd r)
    =\E\bigl[(2Q-1)\{Q(1-Q)\}^k\bigr],
\]
only the prefix $s_0,\ldots,s_{\lfloor(J-1)/2\rfloor}$ is identified at fixed repeat depth.  Equivalently, fixed-depth grouped data identify only the corresponding finite prefix of voting-curve increments, because $V_k-V_{k-1}=\binom{2k-1}{k}s_k$ for $k\ge1$.  Endpoint-rate assumptions such as the local margin condition in Theorem~\ref{thm:threshold-bridge} are therefore structural assumptions on the latent law near the threshold, not consequences of a fixed finite prefix.

This distinction is practical.  With observed or consistently estimated $q_i$, the signature is a feasible nonparametric target.  With finite grouped counts, one can estimate finitely many signature moments without bias and use them for low-budget voting behavior.  Recovering a full signed measure from a fixed finite prefix is extrapolation: it requires regularization, a parametric or smoothness model, or growing repeat depth.  Appendix~\ref{app:finite-repeat-details} gives the exact identification formula, unbiased estimators and CLT, fixed-depth likelihood nonidentification, and growing-depth plug-in consistency.  Stable full-signature estimation under black-box sampling constraints is a natural future direction, including sieves, regularized moment fitting, and regularized NPMLE.

\section{Discussion}
\label{sec:discussion}

Voting is often treated as a simple test-time add-on: draw more samples, take the majority, and hope that residual stochasticity becomes accuracy.  This paper shows that the population behavior of that add-on is already rich in the simplest binary exchangeable setting.  Heterogeneous latent correctness can make voting help, hurt, dip and later surpass its starting point, reverse, or oscillate, all under ordinary binary majority vote.

The exact structural coordinate is the signed voting signature $\omega_\Pi$.  It records how much latent mass lies above versus below the majority threshold at each variance scale $q(1-q)$.  The full de Finetti law of $Q$ determines the voting curve, but the curve does not determine the full law.  It determines exactly $\omega_\Pi$, and this signature determines every odd-budget accuracy, every increment, and the infinite-budget endpoint.  The same object yields a calculus for shape and rates: behavior near $r=0$ controls rapid stabilization, while behavior near $r=1/4$ controls endpoint convergence.  A systematic study of common latent-density restrictions, such as unimodality or log-concavity, and their consequences for voting-curve shape is left open.

This separates structural information from validation information.  If per-example success probabilities are available or can be consistently estimated, the signed signature itself can be estimated as a signed empirical pushforward.  If validation supplies only a fixed number of repeated correctness labels per example, the data impose only finitely many linear constraints on $\omega_\Pi$ and on the branch-symmetric nuisance component.  The finite-repeat appendix identifies exactly which prefix of the signature is learned at depth $J$; fixed-depth MLE recovers that finite prefix and requires additional modeling choices for the rest of the measure.  The two-call partial-identification results of \citet{liu2026twocalls} occupy a different point on this spectrum: they ask what can be certified from the smallest useful validation protocol.  Extending the signed-signature viewpoint beyond binary majority to multiclass plurality is another future direction; Appendix~\ref{app:multiclass-extension} records a basic boundary showing why the scalar binary invariant does not directly generalize.

\newpage
\appendix

\section{Latent laws used in Figure~\ref{fig:shape-examples}}
\label{app:figure1-laws}

All curves in Figure~\ref{fig:shape-examples} start from $V_0=\E Q=3/4$.  The displayed weights are rounded; the unrounded weights used to generate the figure satisfy $\E Q=3/4$.  The displayed laws are simple discrete mixtures:
\begin{align*}
\Pi_{\mathrm{constant}}
    &=0.25\,\delta_0+0.75\,\delta_1,\\
\Pi_{\mathrm{fast\ drop}}
    &=0.3839066\,\delta_{0.3488}+0.6160934\,\delta_1,\\
\Pi_{\mathrm{dip\ then\ surpass}}
    &=0.2072827\,\delta_{0.1690}
      +0.1964989\,\delta_{0.6043333}
      +0.5962184\,\delta_1,\\
\Pi_{\mathrm{rise\ then\ fall}}
    &=0.315067\,\delta_{0.38}
      +0.683229\,\delta_{0.92}
      +0.001704\,\delta_1,\\
\Pi_{\mathrm{slow\ rise}}
    &=0.509684\,\delta_{0.5095}+0.490316\,\delta_1.
\end{align*}
These small mixtures show that ordinary majority voting can display qualitatively different behavior under the same one-call accuracy.

\section{Finite-repeat estimation details}
\label{app:finite-repeat-details}

The signed signature is a structural invariant, but finite repeat-depth data observe repeated correctness bits only through grouped counts.  Suppose each labeled example is queried $J$ times and only the count
\[
    C_i=\sum_{j=1}^{J}B_{ij},
    \qquad C_i\mid Q_i\sim\Bin(J,Q_i),
\]
is recorded.  Let $a_\ell=\E Q^\ell$ with $a_0=1$, and write
\[
    s_k=\int_{[0,1/4]} r^k\,\omega_\Pi(\dd r)
    =\E\bigl[(2Q-1)\{Q(1-Q)\}^k\bigr].
\]
The quantities $s_k$ are exactly the signed Hausdorff moments that drive the vote-curve increments.

It is useful to separate three levels of information.  The full population vote curve determines the entire signed measure by Theorem~\ref{thm:omega-moments}.  The population law of a $J$-repeat grouped sample determines only a finite prefix of that measure through the moments below.  A finite sample with $N$ examples estimates this prefix with sampling error.  Thus exact population identifiability of $\omega_\Pi$ from an infinite curve should not be read as stable recovery of the full signed measure from finitely many repeated calls.

\begin{theorem}
\label{thm:grouped-repeat-identification}
For a fixed repeat depth $J$, the distribution of $C_i$ identifies
\[
    a_\ell=\E\left[\frac{(C_i)_\ell}{(J)_\ell}\right],
    \qquad 0\le \ell\le J,
\]
where $(x)_\ell=x(x-1)\cdots(x-\ell+1)$ and $(x)_0=1$.  Consequently it identifies
\begin{equation}
    s_k=
    \sum_{\ell=0}^{k}(-1)^\ell \binom{k}{\ell}
    \{2a_{k+\ell+1}-a_{k+\ell}\},
    \qquad 0\le k\le \left\lfloor\frac{J-1}{2}\right\rfloor.
    \label{eq:sk-from-raw-moments}
\end{equation}
Given $N$ iid examples, the estimator obtained by replacing $a_\ell$ with
\[
    \widehat a_\ell=\frac1N\sum_{i=1}^N\frac{(C_i)_\ell}{(J)_\ell}
\]
is unbiased for each identified $s_k$.  For any fixed $L\le\lfloor(J-1)/2\rfloor$,
\[
    \sqrt N\{(\widehat s_0,\ldots,\widehat s_L)-(s_0,\ldots,s_L)\}
    \Rightarrow \mathcal N(0,\Sigma_{J,L}),
\]
with covariance equal to the covariance of the corresponding per-example transformed count vector.
\end{theorem}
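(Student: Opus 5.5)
The plan has three parts: the factorial-moment identity, an algebraic expansion of $s_k$, and a standard multivariate CLT.

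\textbf{Factorial moments.} Conditional on $Q_i=q$, the falling factorial $(C_i)_\ell$ counts ordered $\ell$-tuples of distinct indices among the $J$ calls that are all successes. Each such tuple occurs with probability $q^\ell$, and there are $(J)_\ell$ tuples, so
\[
\E[(C_i)_\ell\mid Q_i]=(J)_\ell Q_i^\ell .
\]
This is the standard binomial factorial-moment formula. Taking expectations gives $a_\ell=\E[(C_i)_\ell/(J)_\ell]$ for $0\le\ell\le J$. The right-hand side is a functional of the law of $C_i$, so $a_0,\ldots,a_J$ are identified.

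\textbf{Expansion of $s_k$.} Use the definition $s_k=\E[(2Q-1)Q^k(1-Q)^k]$, which is the pushforward form of Definition~\ref{def:omega}. Expand $(1-Q)^k=\sum_{\ell}(-1)^\ell\binom{k}{\ell}Q^\ell$, then multiply by $(2Q-1)Q^k$ to get
\[
\sum_{\ell=0}^{k}(-1)^\ell\binom{k}{\ell}\{2Q^{k+\ell+1}-Q^{k+\ell}\}.
\]
Linearity of expectation then yields \eqref{eq:sk-from-raw-moments}. The highest moment used is $a_{2k+1}$, so $2k+1\le J$ is exactly $k\le\lfloor (J-1)/2\rfloor$, and every moment needed is among the identified $a_\ell$.

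\textbf{Unbiasedness.} Each $\widehat a_\ell$ is an average of iid variables with mean $a_\ell$, so it is unbiased. The map from $(a_0,\ldots,a_J)$ to $s_k$ is \emph{linear}, so $\widehat s_k$ is unbiased with no bias correction needed. Linearity is the key point: there is no nonlinear plug-in to worry about.

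\textbf{CLT.} Write $\widehat s_k=N^{-1}\sum_i T_k(C_i)$, where
\[
T_k(c)=\sum_{\ell=0}^{k}(-1)^\ell\binom{k}{\ell}\left\{\frac{2(c)_{k+\ell+1}}{(J)_{k+\ell+1}}-\frac{(c)_{k+\ell}}{(J)_{k+\ell}}\right\}.
\]
Since $C_i\in\{0,\ldots,J\}$, the vector $(T_0(C_i),\ldots,T_L(C_i))$ is bounded and iid across $i$, with mean $(s_0,\ldots,s_L)$. The multivariate Lindeberg--L\'evy CLT then gives the stated convergence with $\Sigma_{J,L}=\operatorname{Cov}(T_0(C_1),\ldots,T_L(C_1))$, which is exactly the covariance of the per-example transformed count vector.

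No step is a real obstacle. The only care needed is tracking the index range, namely that the degree $2k+1\le J$ is what makes the identification valid, and noting that $\Sigma_{J,L}$ may be singular. Singularity does not affect the CLT statement, since weak convergence to a possibly degenerate Gaussian still holds.
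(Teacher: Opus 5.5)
Your proposal is correct and follows the paper's proof essentially step for step. Both arguments use the conditional binomial factorial moments $(J)_\ell q^\ell$, the binomial expansion of $(1-Q)^k$ with the degree bound $2k+1\le J$, unbiasedness from linearity of the sample-average estimator, and the multivariate CLT for the bounded transformed count vector. Your counting argument for the factorial-moment identity and your remark that $\Sigma_{J,L}$ may be degenerate are accurate details that the paper leaves implicit.
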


Thus fixed repeat depth identifies a prefix of the signed measure.  Since $V_k-V_{k-1}=\binom{2k-1}{k}s_k$ for $k\ge1$, $J$ repeats identify the first $\lfloor(J-1)/2\rfloor$ nontrivial curve increments.  Two repeats identify $V_0$ and the two-call moments studied by \citet{liu2026twocalls}; the next signed moment $s_1$, which point-identifies the three-vote increment, requires deeper repeat access.  The following small table records the direct numerical implication for several repeat depths.

\begin{center}
\footnotesize
\begin{tabular}{ccl}
\toprule
Repeat depth $J$ & Signature prefix & Directly identified vote increments \\
\midrule
3  & $s_0,s_1$                 & $V_1-V_0$ (1 to 3 votes) \\
5  & $s_0,s_1,s_2$             & through $V_2-V_1$ (3 to 5 votes) \\
9  & $s_0,\ldots,s_4$          & through $V_4-V_3$ (7 to 9 votes) \\
15 & $s_0,\ldots,s_7$          & through $V_7-V_6$ (13 to 15 votes) \\
\bottomrule
\end{tabular}
\end{center}

The fixed-depth likelihood limitation can be stated directly.
The nonparametric MLE over mixing laws solves
\[
    \widehat\Pi\in\arg\max_{\Pi\in\cP([0,1])}
    \sum_{c=0}^{J}N_c\log\int_0^1 \binom{J}{c}q^c(1-q)^{J-c}\,\Pi(\dd q),
\]
where $N_c$ is the number of examples with count $c$.  A maximizer can be chosen with finite support.  At fixed $J$, latent laws with the same $J$-repeat count distribution have the same likelihood, and all population maximizers share the identified prefix in Theorem~\ref{thm:grouped-repeat-identification}.  Higher signed moments, and hence the full $\omega_\Pi$, are extrapolations supplied by the chosen regularization or sieve.  The finite-access target is therefore a truncated signature unless repeat depth grows.

\begin{proposition}
\label{prop:fixed-depth-nonidentification}
For every fixed $J\ge1$, there exist two latent laws $\Pi_1$ and $\Pi_2$ on $[0,1]$ that induce the same distribution of $C\sim\Bin(J,Q)$ but have different signed voting signatures.  More strongly, for some $k>\lfloor(J-1)/2\rfloor$ they have different signed moments
\[
    \int r^k\,\omega_{\Pi_1}(\dd r)
    \ne
    \int r^k\,\omega_{\Pi_2}(\dd r).
\]
Thus fixed-depth NPMLE, even at the population optimum, likelihood-identifies only the finite signature prefix unless additional assumptions are imposed.
\end{proposition}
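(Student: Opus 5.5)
The idea is to build two latent laws whose first $J$ raw moments coincide, so that the count distributions agree, but whose signed signatures differ. The law of $C\sim\Bin(J,Q)$ is determined by, and determines, $(a_0,\ldots,a_J)$; this is Theorem~\ref{thm:grouped-repeat-identification} together with the fact that $\binom{J}{c}q^c(1-q)^{J-c}$ is a polynomial of degree $\le J$. Hence it suffices to produce $\Pi_1\ne\Pi_2$ with equal moments up to degree $J$ and different $\omega$. By the converse half of Theorem~\ref{thm:omega-moments}, $\omega_{\Pi_1}\ne\omega_{\Pi_2}$ holds exactly when some signed moment $s_k$ differs. Indices $k\le\lfloor(J-1)/2\rfloor$ are excluded automatically, because those $s_k$ are functions of $a_0,\ldots,a_J$ by \eqref{eq:sk-from-raw-moments}. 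So any differing index must exceed $\lfloor(J-1)/2\rfloor$, which gives the ``more strongly'' clause.

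For the construction I would take $\Pi_1$ uniform on $[0,1]$, with density $1$. I would set $\Pi_2$ to have density $1+\varepsilon p(q)$, where $p$ is a nonzero polynomial orthogonal in $L^2[0,1]$ to all polynomials of degree $\le J$. A natural choice is the shifted Legendre polynomial of degree $J+1$. For small $\varepsilon$ this density is nonnegative, and it integrates to $1$ because $p\perp 1$. The first $J$ moments agree, so the count laws coincide.

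It remains to show $\omega_{\Pi_1}\ne\omega_{\Pi_2}$, i.e.\ that $p$ has nonzero branch asymmetry. By Theorem~\ref{thm:density-rep} this asymmetry is $g_p(r)=p(q_+(r))-p(q_-(r))$. It vanishes identically iff $p(q)=p(1-q)$, i.e.\ iff $p$ is symmetric about $1/2$. The shifted Legendre polynomial $\tilde P_{J+1}$ satisfies $\tilde P_{J+1}(1-q)=(-1)^{J+1}\tilde P_{J+1}(q)$, so it is antisymmetric when $J$ is even. When $J$ is odd I would use $\tilde P_{J+2}$ instead, which is antisymmetric and still orthogonal to all degrees $\le J$. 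In either case $p$ is odd about $1/2$ and nonzero, so $g_p(r)=2p(q_+(r))\not\equiv0$. By uniqueness in Theorem~\ref{thm:omega-moments}, some $s_k$ then differs. Concretely, $\int r^k g_p(r)\,dr$ is nonzero for some $k$, since by Corollary~\ref{cor:moment-recovery} a function on $[0,1/4]$ all of whose moments vanish is zero a.e.

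The final NPMLE sentence follows because the likelihood depends on $\Pi$ only through the count law: $\Pi_1$ and $\Pi_2$ give identical population likelihoods yet different signatures.

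The main point to check is the parity choice: the perturbation must be antisymmetric about $1/2$, since a symmetric one would be pure branch-symmetric nuisance and leave $\omega$ unchanged.
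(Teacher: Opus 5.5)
Your argument is correct, but it takes a different route from the paper's.

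\textbf{The paper's route.} The paper first fixes an arbitrary $k>\lfloor (J-1)/2\rfloor$. It notes that $h(q)=(2q-1)\{q(1-q)\}^k$ has degree $2k+1>J$. It then takes a nullspace vector $\alpha$ of a $(J+1)\times(J+2)$ Vandermonde matrix and builds two finitely supported laws $\sum_i (w_i\pm\varepsilon\alpha_i)\delta_{x_i}$. These have equal raw moments through degree $J$ but different $\int h\,\dd\Pi$. So the differing index is prescribed in advance, and no appeal to the uniqueness half of Theorem~\ref{thm:omega-moments} is needed.

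\textbf{Your route.} You perturb the uniform density by an antisymmetric shifted Legendre polynomial. You read off the signature change through the branch-asymmetry density of Theorem~\ref{thm:density-rep}. You then invoke uniqueness to get some differing moment, and place it beyond the prefix because the earlier $s_k$ are functions of $a_0,\ldots,a_J$.

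\textbf{What each buys.} Your version produces absolutely continuous, polynomial-density examples. It also makes explicit that only the antisymmetric part of a moment-matching perturbation is visible to voting, while the symmetric part is pure nuisance. The paper's version produces finitely supported laws, which is the natural class for the NPMLE remark. It also needs only linear algebra.

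\textbf{A sharpening you can get without uniqueness.} The signed-moment difference in your construction is $\varepsilon\int_0^1 (2q-1)\{q(1-q)\}^k p(q)\,\dd q$. This vanishes whenever $2k+1<\deg p$, by orthogonality. It is nonzero when $2k+1=\deg p$, because both polynomials then have exact degree $2k+1$. With your parity choice, $\deg p$ is whichever of $J+1$, $J+2$ is odd, so the first unidentified index $k=\lfloor (J-1)/2\rfloor+1$ already differs. Taking $p=\tilde P_{2k+1}$ instead gives a pair differing at any prescribed $k>\lfloor (J-1)/2\rfloor$, which matches the paper's stronger form.

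\textbf{A citation slip.} The fact that an integrable function on $[0,1/4]$ with all moments zero vanishes a.e.\ comes from the Stone--Weierstrass argument in the proof of Theorem~\ref{thm:omega-moments}, not from Corollary~\ref{cor:moment-recovery}. That step is redundant anyway once you have invoked uniqueness.
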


\begin{proposition}
\label{prop:plugin-omega}
Let $J=J_N$ and define $\widehat q_i=C_i/J_N$.  The plug-in signed measure
\[
    \widehat\omega_{N,J}
    =\frac1N\sum_{i=1}^{N}(2\widehat q_i-1)\,
      \delta_{\widehat q_i(1-\widehat q_i)}
\]
satisfies, for every Lipschitz test function $\varphi$ on $[0,1/4]$,
\begin{equation}
    \E\left|\int\varphi\,\dd\widehat\omega_{N,J}
      -\int\varphi\,\dd\omega_\Pi\right|
    \le
    \frac{\|\varphi\|_\infty}{\sqrt N}
    +\frac{2\|\varphi\|_\infty+\operatorname{Lip}(\varphi)}{2\sqrt J}.
    \label{eq:plugin-bound}
\end{equation}
Consequently, for every fixed Lipschitz test function $\varphi$, the integral $\int\varphi\,\dd\widehat\omega_{N,J}$ converges to $\int\varphi\,\dd\omega_\Pi$ in $L^1$, and therefore in probability, whenever $N\to\infty$ and $J_N\to\infty$.
\end{proposition}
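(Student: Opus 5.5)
The plan is to split the error into a sampling term and a discretization term, compare against the oracle empirical measure built from the true latent probabilities, and then bound the per-example plug-in error by binomial variance. Condition on the latent values $Q_1,\ldots,Q_N$ and introduce the oracle measure
\[
    \omega_N^\circ=\frac1N\sum_{i=1}^N(2Q_i-1)\delta_{Q_i(1-Q_i)}.
\]
Writing $F(q)=(2q-1)\varphi(q(1-q))$, the quantities to compare are $\int\varphi\,\dd\omega_\Pi=\E F(Q)$, $\int\varphi\,\dd\omega_N^\circ=N^{-1}\sum_i F(Q_i)$, and $\int\varphi\,\dd\widehat\omega_{N,J}=N^{-1}\sum_i F(\widehat q_i)$. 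The first identity is just Definition~\ref{def:omega} as a pushforward. The triangle inequality then splits the target into a sampling term and a discretization term.

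For the sampling term, I use that the $F(Q_i)$ are iid with mean $\E F(Q)$ and $|F|\le\|\varphi\|_\infty$, since $|2q-1|\le1$. By Jensen,
\[
    \E\Bigl|N^{-1}\sum_i F(Q_i)-\E F(Q)\Bigr|\le\sqrt{\operatorname{Var}F(Q)/N}\le\|\varphi\|_\infty/\sqrt N,
\]
which is the first term of \eqref{eq:plugin-bound}.

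For the discretization term, I bound $\E|F(\widehat q_i)-F(Q_i)|$ for a single example and average over $i$. The function $F$ is Lipschitz in $q$. Writing
\[
    F(x)-F(y)=2(x-y)\varphi(x(1-x))+(2y-1)\{\varphi(x(1-x))-\varphi(y(1-y))\},
\]
and using $|x(1-x)-y(1-y)|=|x-y|\,|1-x-y|\le|x-y|$, I get $|F(x)-F(y)|\le(2\|\varphi\|_\infty+\operatorname{Lip}(\varphi))|x-y|$. Conditionally on $Q_i=q$, $\widehat q_i=C_i/J$ with $C_i\sim\Bin(J,q)$, so
\[
    \E|\widehat q_i-q|\le\sqrt{q(1-q)/J}\le1/(2\sqrt J).
\]
Taking the outer expectation over $Q_i$ and averaging over $i$ gives the second term $(2\|\varphi\|_\infty+\operatorname{Lip}(\varphi))/(2\sqrt J)$. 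Adding the two terms yields \eqref{eq:plugin-bound}.

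Both terms vanish as $N,J_N\to\infty$, which gives $L^1$ convergence, and Markov's inequality then gives convergence in probability. I expect no real obstacle. The only point needing care is the Lipschitz constant of the composite $F$, specifically using $|1-x-y|\le1$ to get exactly the constant $2\|\varphi\|_\infty+\operatorname{Lip}(\varphi)$. I would also note that independence across $i$ is needed only for the sampling term, while the discretization term uses linearity of expectation alone.
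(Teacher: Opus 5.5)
Your proof is correct, and it uses the same three ingredients as the paper. These are the bound $|h|\le\|\varphi\|_\infty$ with Lipschitz constant $2\|\varphi\|_\infty+\operatorname{Lip}(\varphi)$ for $h(q)=(2q-1)\varphi(q(1-q))$ (your $F$), the $1/\sqrt N$ variance bound for bounded iid averages, and $\E|\widehat q-Q|\le 1/(2\sqrt J)$.

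The only difference is where the triangle inequality pivots. The paper centers the plug-in average at its own mean. It writes the error as the fluctuation $N^{-1}\sum_i\{h(\widehat q_i)-\E h(\widehat q_i)\}$ plus the deterministic bias $\E h(\widehat q)-\E h(Q)$, and then bounds that bias crudely by $\E|h(\widehat q)-h(Q)|$. You instead pivot through the oracle average $N^{-1}\sum_i h(Q_i)$, so your second term is a per-example coupling error. The constants come out identical.

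Each route has a small advantage. Yours is the one you noted yourself: the coupling term uses only linearity and the conditional binomial marginals. It therefore survives dependence among the $C_i$ given the latents, whereas the paper's fluctuation term needs the $h(\widehat q_i)$ to be uncorrelated. The paper's route isolates a pure bias term $|\E h(\widehat q)-\E h(Q)|$. For smooth $\varphi$ that term could be sharpened to $O(1/J)$ by a second-order Taylor expansion, since $\E[\widehat q\mid Q]=Q$.
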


\section{Proofs for binary voting}
\label{app:binary-proofs}

\begin{proof}[Proof of Proposition~\ref{prop:exchangeable-representation}]
This is the binary de Finetti representation theorem.  Infinite exchangeability implies the existence of a mixing law $\Pi$ on $[0,1]$ such that, conditional on $Q\sim\Pi$, the sequence is iid Bernoulli$(Q)$.  Averaging the conditional majority accuracy $P_n(Q)$ over this mixing law gives the displayed curve formula.  Finite exchangeability alone gives an approximation or a finite-population representation, not an exact infinite mixing law; the paper uses the projective repeated-call idealization.
\end{proof}

\begin{proof}[Proof of Proposition~\ref{prop:even-collapse}]
Generate $2n$ iid correctness bits and then delete one of the $2n$ positions uniformly at random, independently of the bits.  The remaining $2n-1$ bits are again iid $\Bern(q)$, so their majority accuracy is $P_{n-1}(q)$.  Conditional on the original $2n$ bits, deleting one bit gives the same decision as the fair-tie even rule: a strict correct majority remains correct after deletion, a strict incorrect majority remains incorrect after deletion, and an exact $n$--$n$ tie becomes correct with probability $1/2$ because the deleted bit is equally likely to be correct or incorrect.  Hence the fair-tie $2n$-vote accuracy equals the $(2n-1)$-vote majority accuracy.
\end{proof}

\begin{proof}[Proof of Proposition~\ref{prop:pointwise-diff}]
Let $X\sim\Bin(2n+1,q)$ and $Y\sim\Bin(2,q)$ be independent.  Then $X+Y\sim\Bin(2n+3,q)$, and the two majority decisions differ only when $X=n$ or $X=n+1$.  If $X=n$, the old vote fails and the new vote succeeds exactly when $Y=2$.  If $X=n+1$, the old vote succeeds and the new vote fails exactly when $Y=0$.  Hence
\begin{align*}
P_{n+1}(q)-P_n(q)
&=\PP\{X=n\}\PP\{Y=2\}-\PP\{X=n+1\}\PP\{Y=0\}\\
&=\binom{2n+1}{n}q^{n+2}(1-q)^{n+1}
 -\binom{2n+1}{n+1}q^{n+1}(1-q)^{n+2}\\
&=\binom{2n+1}{n+1}q^{n+1}(1-q)^{n+1}(2q-1).
\end{align*}
For the shape statement, write $d_n(q)=P_{n+1}(q)-P_n(q)$.  If $q\in(0,1)$,
\begin{equation*}
\frac{d_{n+1}(q)}{d_n(q)}
=\frac{\binom{2n+3}{n+2}}{\binom{2n+1}{n+1}}q(1-q)
=\frac{2(2n+3)}{n+2}q(1-q)
\le\frac{2n+3}{2n+4}<1.
\end{equation*}
The sign of $d_n(q)$ is the sign of $2q-1$, which gives the monotonicity and discrete curvature claims.
\end{proof}

\begin{proof}[Proof of Theorem~\ref{thm:omega-moments}]
By definition,
\begin{equation*}
\omega_\Pi([0,1/4])=\int(2q-1)\,\Pi(\dd q)=2\E Q-1=2V_0-1,
\end{equation*}
which gives \eqref{eq:v0-omega}.  Averaging Proposition~\ref{prop:pointwise-diff} over $Q\sim\Pi$ gives
\begin{equation*}
V_{n+1}-V_n
=\binom{2n+1}{n+1}\int q^{n+1}(1-q)^{n+1}(2q-1)\,\Pi(\dd q).
\end{equation*}
This equals \eqref{eq:delta-omega} by the definition of the signed pushforward.

The curve determines the moments
\begin{equation*}
    m_0=\int\omega_\Pi(\dd r)=2V_0-1,
    \qquad
    m_k=\int r^k\omega_\Pi(\dd r)=\frac{V_k-V_{k-1}}{\binom{2k-1}{k}},\quad k\ge1.
\end{equation*}
If two finite signed measures on $[0,1/4]$ have the same moments of all orders, their difference integrates every polynomial to zero.  By Stone--Weierstrass, polynomials are uniformly dense in $C([0,1/4])$, so the difference integrates every continuous function to zero.  A finite signed Borel measure on a compact metric space is determined by its integrals against continuous functions; hence the two measures coincide.  This proves uniqueness and the stated equivalence between the full curve and the signed voting signature.
\end{proof}

\begin{proof}[Proof of Proposition~\ref{prop:endpoint-omega}]
Let $T(q)=q(1-q)$.  By definition, $\omega_\Pi=T_\#((2q-1)\Pi)$, and hence
\begin{equation*}
    |\omega_\Pi|\le T_\#(|2q-1|\Pi)
\end{equation*}
as finite measures.  Therefore
\begin{equation*}
\int_{[0,1/4)}\frac{1}{\sqrt{1-4r}}\,|\omega_\Pi|(\dd r)
\le
\int_{[0,1]\setminus\{1/2\}}\frac{|2q-1|}{\sqrt{1-4q(1-q)}}\,\Pi(\dd q)
\le 1,
\end{equation*}
so the integral is absolutely well defined.  For $q\ne1/2$,
\begin{equation*}
\frac{2q-1}{\sqrt{1-4q(1-q)}}
=\begin{cases}
1,&q>1/2,\\
-1,&q<1/2.
\end{cases}
\end{equation*}
The signed pushforward gives
\begin{align*}
\int_{[0,1/4)}\frac{1}{\sqrt{1-4r}}\,\omega_\Pi(\dd r)
&=\Pi((1/2,1])-\Pi([0,1/2)).
\end{align*}
The right-hand side equals $2V_\infty(\Pi)-1$ under the convention that an atom at $1/2$ contributes half to $V_\infty$.
\end{proof}

\begin{proof}[Proof of Proposition~\ref{prop:signed-signature-realizability}]
Necessity follows from the construction of $\omega_\Pi$.  The point $r=1/4$ has only the preimage $q=1/2$, where the orientation $2q-1$ is zero, so $\omega_\Pi(\{1/4\})=0$.  The total-variation domination argument in the proof of Proposition~\ref{prop:endpoint-omega} gives \eqref{eq:signed-realizability}.

For sufficiency, write the Jordan decomposition $\mu=\mu^+-\mu^-$.  Let
\begin{equation*}
    q_\pm(r)=\frac{1\pm\sqrt{1-4r}}{2},\qquad r<1/4,
\end{equation*}
and define a subprobability measure on $[0,1]$ by
\begin{equation*}
    \Pi_0(A)=
    \int_{[0,1/4)} \frac{\1\{q_+(r)\in A\}}{\sqrt{1-4r}}\,\mu^+(\dd r)
    +
    \int_{[0,1/4)} \frac{\1\{q_-(r)\in A\}}{\sqrt{1-4r}}\,\mu^-(\dd r).
\end{equation*}
Its total mass is the integral in \eqref{eq:signed-realizability}.  Add the remaining mass, if any, at $q=1/2$:
\begin{equation*}
    \Pi=\Pi_0+\left(1-\Pi_0([0,1])\right)\delta_{1/2}.
\end{equation*}
We verify the signed pushforward directly.  For any Borel set $B\subseteq[0,1/4]$, the added mass at $q=1/2$ contributes nothing because its orientation is zero.  Also $\mu(\{1/4\})=0$.  Hence
\[
\begin{aligned}
\omega_\Pi(B)
&=\int_{B\cap[0,1/4)} \sqrt{1-4r}\,\frac{1}{\sqrt{1-4r}}\,\mu^+(\dd r)\\
&\quad-\int_{B\cap[0,1/4)} \sqrt{1-4r}\,\frac{1}{\sqrt{1-4r}}\,\mu^-(\dd r)\\
&=\mu^+(B)-\mu^-(B)=\mu(B).
\end{aligned}
\]
Therefore $\omega_\Pi=\mu$.
\end{proof}

\section{Proofs for branch asymmetry, variation, and shape}
\label{app:branch-proofs}

\begin{proof}[Proof of Theorem~\ref{thm:density-rep}]
On the upper branch $q=q_+(r)$,
\begin{equation*}
2q_+(r)-1=\sqrt{1-4r},
\qquad
\left|\frac{\dd q_+}{\dd r}\right|=\frac{1}{\sqrt{1-4r}}.
\end{equation*}
Thus the upper branch contributes $f(q_+(r))\dd r$ to $\omega_\Pi$.  On the lower branch $q=q_-(r)$,
\begin{equation*}
2q_-(r)-1=-\sqrt{1-4r},
\qquad
\left|\frac{\dd q_-}{\dd r}\right|=\frac{1}{\sqrt{1-4r}},
\end{equation*}
so the lower branch contributes $-f(q_-(r))\dd r$.  Hence $\omega_\Pi(\dd r)=g(r)\dd r$.  The increment formula follows from Theorem~\ref{thm:omega-moments}.

For the level formula, split the integral defining $V_n$ over the two branches:
\begin{align*}
V_n
&=\int_0^{1/4}\frac{P_n(q_+(r))f(q_+(r))+P_n(q_-(r))f(q_-(r))}{\sqrt{1-4r}}\,\dd r.
\end{align*}
Since $P_n(1-q)=1-P_n(q)$, this equals
\begin{align*}
&\frac12\int_0^{1/4}\frac{f(q_+(r))+f(q_-(r))}{\sqrt{1-4r}}\,\dd r \\
&\quad +\frac12\int_0^{1/4}
\frac{2P_n(q_+(r))-1}{\sqrt{1-4r}}
\{f(q_+(r))-f(q_-(r))\}\,\dd r.
\end{align*}
The first term is $1/2$ because $f$ integrates to one.  For the endpoint value of $\psi_n$, write $x=q_+(r)-1/2=\sqrt{1-4r}/2$.  Since $P_n(1/2)=1/2$ and $P_n$ is a polynomial,
\[
    \frac{2P_n(q_+(r))-1}{\sqrt{1-4r}}
    =\frac{P_n(1/2+x)-P_n(1/2)}{x}
    \longrightarrow P_n'(1/2).
\]
The derivative identity follows from differentiating the binomial tail, or equivalently from the beta-density identity for the median tail,
\[
    P_n'(q)=(2n+1)\binom{2n}{n}q^n(1-q)^n.
\]
Thus the singularity at $r=1/4$ is removable, and the second term is \eqref{eq:g-level}.
\end{proof}

\begin{proof}[Proof of Theorem~\ref{thm:realizability}]
If $f$ realizes $g$, set $s(r)=f(q_+(r))+f(q_-(r))$.  Nonnegativity of the two branch values implies $s(r)\ge |g(r)|$ almost everywhere, and the change of variables gives
\begin{equation*}
1=\int_0^1f(q)\,\dd q
=\int_0^{1/4}\frac{s(r)}{\sqrt{1-4r}}\,\dd r.
\end{equation*}
Therefore \eqref{eq:realizability} is necessary.

Conversely, suppose \eqref{eq:realizability} holds and choose $h\ge0$ satisfying \eqref{eq:slack-h}.  Define branch values by \eqref{eq:realizing-density}.  They are nonnegative, their difference is $g$, and their sum is $|g|+h$.  Hence
\begin{equation*}
\int_0^1f(q)\,\dd q
=\int_0^{1/4}\frac{|g(r)|+h(r)}{\sqrt{1-4r}}\,\dd r=1.
\end{equation*}
Thus $f$ is a density with branch asymmetry $g$.  The same argument shows that every realizing density arises from $h=s-|g|$.  If the inequality is strict, nonzero slack choices exist and uniqueness fails.  If equality holds, the slack integral is zero; since $h\ge0$, this forces $h=0$ almost everywhere, giving the stated saturated form.  The converse uniqueness implication follows immediately.
\end{proof}

\begin{proof}[Proof of Corollary~\ref{cor:moment-recovery}]
The identity for $k=0$ is \eqref{eq:v0-omega}.  For $k\ge1$, set $n=k-1$ in \eqref{eq:delta-omega} and rearrange.  The density version follows from Theorem~\ref{thm:density-rep}.
\end{proof}

\begin{proof}[Proof of Proposition~\ref{prop:universal-variation}]
By telescoping and Theorem~\ref{thm:omega-moments},
\begin{align*}
|V_m-V_n|
&\le \sum_{k=n}^{m-1}|V_{k+1}-V_k| \\
&\le \|\omega_\Pi\|_{\mathrm{TV}}
    \sum_{k=n}^{m-1}\binom{2k+1}{k+1}\sup_{0\le r\le1/4} r^{k+1},
\end{align*}
which gives \eqref{eq:variation-bound-sum}.  For every $k\ge0$,
\begin{equation*}
\binom{2k+1}{k+1}4^{-(k+1)}
=\frac12\binom{2k+2}{k+1}4^{-(k+1)}
\le \frac{1}{2\sqrt{\pi(k+1)}},
\end{equation*}
using the standard central-binomial bound $\binom{2j}{j}4^{-j}\le1/\sqrt{\pi j}$.  Since
\begin{equation*}
\frac{1}{2\sqrt{k+1}}\le \sqrt{k+1}-\sqrt{k},
\end{equation*}
we obtain the telescoping estimate
\begin{equation*}
\sum_{k=n}^{m-1}\binom{2k+1}{k+1}4^{-(k+1)}
\le \frac{1}{\sqrt{\pi}}(\sqrt{m}-\sqrt{n}).
\end{equation*}
Since both vote accuracies lie in $[0,1]$, the bound can also be truncated at one.
\end{proof}

\begin{proof}[Proof of Corollary~\ref{cor:near-zero}]
Repeat the telescoping argument from Proposition~\ref{prop:universal-variation}, now using $r\le a$ on the support of $\omega_\Pi$.  The elementary bound $\binom{2k+1}{k+1}\le2\cdot4^k$ gives
\begin{equation*}
\binom{2k+1}{k+1}a^{k+1}\le 2a(4a)^k.
\end{equation*}
Summing the resulting geometric series yields \eqref{eq:near-zero-bound}.  Letting $m\to\infty$ gives the endpoint claim by dominated convergence of majority votes.
\end{proof}

\begin{proof}[Proof of Theorem~\ref{thm:threshold-bridge}]
For $q\ne1/2$, Hoeffding's inequality gives
\begin{equation*}
    |P_n(q)-\1\{q>1/2\}|
    \le \exp\{-2M(q-1/2)^2\},
    \qquad M=2n+1.
\end{equation*}
The margin condition implies $\Pi(\{1/2\})=0$ by letting $t\downarrow0$.  Therefore
\begin{equation*}
    |V_n(\Pi)-\Vinf(\Pi)|
    \le \E \exp\{-2M\Delta^2\},
    \qquad \Delta=|Q-1/2|.
\end{equation*}
Use the layer-cake identity with $S=\Delta^2$ and split at $t_0^2$:
\begin{equation*}
\E e^{-2MS}
\le e^{-2Mt_0^2}+\int_0^{t_0^2} 2M e^{-2Mt}\,\Pi(S\le t)\,\dd t.
\end{equation*}
The local margin condition implies $\Pi(S\le t)\le C t^{\kappa/2}$ for $0<t<t_0^2$, so
\begin{equation*}
\int_0^{t_0^2} 2M e^{-2Mt} C t^{\kappa/2}\,\dd t
\le C\Gamma\!\left(1+\frac{\kappa}{2}\right)(2M)^{-\kappa/2}.
\end{equation*}
This proves \eqref{eq:threshold-bridge}.  If $\Delta\ge\delta$ almost surely, the Hoeffding bound directly gives \eqref{eq:gap-bridge}.  If the density of $Q$ is bounded by $B$ on $[0,1]$, then $\Pi(|Q-1/2|\le t)\le2Bt$ for all $0<t<1/2$; applying the theorem with $C=2B$, $\kappa=1$, and $t_0=1/2$ gives \eqref{eq:density-bridge}.
\end{proof}

\begin{lemma}
\label{lem:local-binomial-tail}
Fix $c>0$.  For all sufficiently large odd $M$, let $q_M=1/2+c/\sqrt M$.  There exist constants $p_c>0$ and $M_c<\infty$ such that
\begin{equation*}
    \PP\left\{\Bin(M,q_M)\le\frac{M-1}{2}\right\}\ge p_c,
    \qquad M\ge M_c.
\end{equation*}
\end{lemma}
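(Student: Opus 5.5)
The plan is to apply the central limit theorem to a standardized binomial and then pass from a $\liminf$ statement to an explicit threshold $M_c$. Let $X_M\sim\Bin(M,q_M)$ with $q_M=1/2+c/\sqrt M$, which lies in $(1/2,1)$ once $M>4c^2$. The mean is $Mq_M=M/2+c\sqrt M$ and the variance is $\sigma_M^2=Mq_M(1-q_M)=M/4-c^2$. Standardizing gives
\[
\PP\Bigl\{X_M\le\tfrac{M-1}{2}\Bigr\}
=\PP\Bigl\{\tfrac{X_M-Mq_M}{\sigma_M}\le z_M\Bigr\},
\qquad
z_M=\frac{-c\sqrt M-\tfrac12}{\sqrt{M/4-c^2}}\longrightarrow -2c .
\]

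The next step is to invoke the Berry--Esseen theorem for sums of independent Bernoulli$(q_M)$ variables. The third absolute central moment satisfies $\rho_M\le q_M(1-q_M)\le 1/4$, and $q_M(1-q_M)\to1/4$. Hence the Kolmogorov distance between the law of the standardized sum and the standard normal is at most $C_{\mathrm{BE}}\rho_M/(\sigma_M^3/M)\cdot M^{-1/2}$, which is $O(M^{-1/2})$ uniformly once, say, $q_M(1-q_M)\ge 1/8$. Writing $\Phi$ for the standard normal cdf, this gives
\[
\PP\Bigl\{X_M\le\tfrac{M-1}{2}\Bigr\}\ge \Phi(z_M)-O(M^{-1/2}).
\]

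Finally, because $\Phi$ is continuous and $z_M\to-2c$, the right-hand side tends to $\Phi(-2c)>0$. I would set $p_c=\Phi(-2c)/2$ and choose $M_c$ large enough that three conditions hold: $M_c>4c^2$, $\Phi(z_M)\ge \tfrac34\Phi(-2c)$, and the Berry--Esseen error is at most $\tfrac14\Phi(-2c)$.

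The main, though mild, obstacle is that $q_M$ changes with $M$, so the ordinary CLT does not apply directly. This is why I use Berry--Esseen, which is uniform in the success probability whenever $q(1-q)$ is bounded away from zero. The Lindeberg CLT for triangular arrays would also work, since the summands are bounded and $\sigma_M\to\infty$. One alternative that avoids the normal approximation is a direct lower bound on the point probabilities near the mode via Stirling's formula, summed over a window of width $\sqrt M$ below $M/2$. I would use this only if a fully explicit constant were wanted.
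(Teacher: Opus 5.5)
Your proposal is correct and follows the paper's proof: you standardize, note that the threshold tends to $-2c$, pass to the limit $\Phi(-2c)$ via a triangular-array normal approximation, and take $p_c=\Phi(-2c)/2$; using Berry--Esseen instead of the Lindeberg CLT only makes the approximation step quantitative. One cosmetic slip: with $\sigma_M^2=Mq_M(1-q_M)$ the Berry--Esseen bound is $C_{\mathrm{BE}}\rho_M M/\sigma_M^3$, which is already $O(M^{-1/2})$, so the extra factor $M^{-1/2}$ in your display should be dropped.
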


\begin{proof}
Let $X_M\sim\Bin(M,q_M)$.  The triangular-array central limit theorem gives
\begin{equation*}
    \frac{X_M-Mq_M}{\sqrt{Mq_M(1-q_M)}}\Rightarrow \mathcal N(0,1).
\end{equation*}
Moreover,
\begin{equation*}
    \frac{(M-1)/2-Mq_M}{\sqrt{Mq_M(1-q_M)}}\to -2c.
\end{equation*}
Hence the lower-tail probability converges to $\Phi(-2c)>0$.  Taking $p_c=\Phi(-2c)/2$ for all sufficiently large $M$ proves the claim.
\end{proof}

\begin{proof}[Proof of Theorem~\ref{thm:rate-sharp}]
Let $a=\min\{1/2, C t_0^\kappa/2\}$.  Let $X$ be supported on $(0,t_0]$ with distribution function $\PP(X\le t)=(t/t_0)^\kappa$ for $0<t\le t_0$, and define
\begin{equation*}
Q=\begin{cases}
1, & \text{with probability }1-a,\\
1/2+X, & \text{with probability }a.
\end{cases}
\end{equation*}
This law belongs to $\cG(C,\kappa,t_0)$ because for $0<t<t_0$,
\begin{equation*}
\PP(|Q-1/2|\le t)=a(t/t_0)^\kappa\le Ct^\kappa.
\end{equation*}
It is supported above $1/2$, so $\Vinf(\Pi)=1$.  Let
\begin{equation*}
q_M=\frac12+\frac{1}{4\sqrt M},
\qquad
p_M=\PP\left\{\Bin(M,q_M)\le\frac{M-1}{2}\right\}.
\end{equation*}
Lemma~\ref{lem:local-binomial-tail} with $c=1/4$ gives $p_M\ge p_0$ for all sufficiently large odd $M$ and some $p_0>0$.  For sufficiently large $M$, $1/(4\sqrt M)\le t_0$.  Since majority error is decreasing in $q$ on $[1/2,1]$,
\begin{align*}
1-V_n(\Pi)
&=\E[1-P_n(Q)]\\
&\ge a\,\PP\left(X\le\frac{1}{4\sqrt M}\right)\{1-P_n(q_M)\}\\
&=a\left(\frac{1}{4t_0\sqrt M}\right)^\kappa p_M
\ge a(4t_0)^{-\kappa}p_0 M^{-\kappa/2}.
\end{align*}
Because $\Vinf(\Pi)=1$, this proves the lower bound with $c_*(C,\kappa,t_0)=a(4t_0)^{-\kappa}p_0$.
\end{proof}

\begin{proof}[Proof of Proposition~\ref{prop:shape-criterion}]
Assume $\omega_\Pi$ is nonnegative.  From Theorem~\ref{thm:omega-moments}, $\Delta_n\ge0$.  Moreover,
\begin{align*}
\Delta_{n+1}
&=\binom{2n+3}{n+2}\int r^{n+2}\,\omega_\Pi(\dd r)\\
&=\int \left\{\frac{\binom{2n+3}{n+2}}{\binom{2n+1}{n+1}}r\right\}
\binom{2n+1}{n+1}r^{n+1}\,\omega_\Pi(\dd r).
\end{align*}
Since $0\le r\le1/4$,
\begin{equation*}
\frac{\binom{2n+3}{n+2}}{\binom{2n+1}{n+1}}r
=\frac{2(2n+3)}{n+2}r
\le\frac{2n+3}{2n+4}<1.
\end{equation*}
Therefore $\Delta_{n+1}\le\Delta_n$, proving discrete concavity.  The case $-\omega_\Pi\ge0$ follows by reversing signs.
\end{proof}

\begin{proof}[Proof of Example~\ref{ex:infinite-oscillation}]
The normalizing constant $S$ is finite because $1-4r_j=1-e^{-2^{-j}}\asymp2^{-j}$, so the summands are of order $e^{-j^2}2^{j/2}$.  By the definition of $S$, the coefficients $a_j$ sum to one, so the construction defines a probability measure.  If $j$ is even, the atom is placed on the upper branch and contributes $S^{-1}e^{-j^2}\delta_{r_j}$ to $\omega$; if $j$ is odd, it is placed on the lower branch and contributes $-S^{-1}e^{-j^2}\delta_{r_j}$.  This gives the displayed formula for $\omega_{\Pi_{\mathrm{osc}}}$.

It remains to check the signs of the selected moments.  Since $r_j=\frac14 e^{-2^{-j}}$,
\begin{equation*}
    4^k S\int r^k\,\omega_{\Pi_{\mathrm{osc}}}(\dd r)
    =\sum_{\ell=1}^{\infty}(-1)^\ell
      \exp\{-\ell^2-k2^{-\ell}\}.
\end{equation*}
Set $k_j=3j2^j$ and write
\begin{equation*}
    \phi_\ell^{(j)}=\ell^2+k_j2^{-\ell}.
\end{equation*}
The term $\ell=j$ is the unique dominant term.  Indeed, for $h\ge1$,
\begin{align*}
    \phi_{j+h}^{(j)}-\phi_j^{(j)}
    &=j(2h-3+3\cdot2^{-h})+h^2
      \ge \frac j2+h^2,\\
    \phi_{j-h}^{(j)}-\phi_j^{(j)}
    &=j(3\cdot2^h-3-2h)+h^2
      \ge j+h^2,
\end{align*}
where the second line is used only when $1\le h\le j-1$.  Hence the total contribution of all non-$j$ terms, divided by the magnitude of the $j$th term, is at most
\begin{equation*}
    \sum_{\ell\ne j}\exp\{-(\phi_\ell^{(j)}-\phi_j^{(j)})\}
    \le 2e^{-j/2}\sum_{h=1}^{\infty}e^{-h^2},
\end{equation*}
which is smaller than one for all sufficiently large $j$.  The sign of the selected moment is therefore the sign of the $j$th term, namely $(-1)^j$.  Since
\begin{equation*}
    V_{k_j}-V_{k_j-1}
    =\binom{2k_j-1}{k_j}\int r^{k_j}\,\omega_{\Pi_{\mathrm{osc}}}(\dd r),
\end{equation*}
the increments along this subsequence alternate in sign.  Between each pair of consecutive selected indices with opposite increment signs, the full increment sequence must change sign at least once.  Since the selected indices are unbounded, the curve changes direction infinitely many times.
\end{proof}

\section{Proofs for finite-repeat estimation}
\label{app:finite-repeat-proofs}

\begin{proof}[Proof of Theorem~\ref{thm:grouped-repeat-identification}]
Conditional on $Q=q$, the factorial moments of $C\sim\Bin(J,q)$ are
\begin{equation*}
    \E[(C)_\ell\mid Q=q]=(J)_\ell q^\ell,
    \qquad 0\le \ell\le J.
\end{equation*}
Averaging over $Q$ gives $a_\ell=\E[(C)_\ell/(J)_\ell]$.  Next expand
\begin{align*}
    s_k
    &=\E\bigl[(2Q-1)Q^k(1-Q)^k\bigr] \\
    &=\sum_{\ell=0}^{k}(-1)^\ell \binom{k}{\ell}
      \E\bigl[2Q^{k+\ell+1}-Q^{k+\ell}\bigr],
\end{align*}
which is \eqref{eq:sk-from-raw-moments}.  This requires moments only up to degree $2k+1$, so it is identified whenever $2k+1\le J$.

The estimator is a sample average of a bounded function of $C_i$, hence is unbiased by the preceding identities.  The multivariate central limit theorem applied to the bounded vector of transformed counts gives the stated Gaussian limit, with covariance equal to its population covariance.
\end{proof}

\begin{proof}[Proof of Proposition~\ref{prop:fixed-depth-nonidentification}]
The distribution of $C\sim\Bin(J,Q)$ is determined by the raw moments $a_0,\ldots,a_J$, because each probability $\PP(C=c)$ is the expectation of a polynomial in $Q$ of degree at most $J$.

Choose $k>\lfloor(J-1)/2\rfloor$ and set
\begin{equation*}
    h(q)=(2q-1)\{q(1-q)\}^k.
\end{equation*}
This polynomial has degree $2k+1>J$, so it is not in the span of $1,q,\ldots,q^J$.  Hence there exist distinct points $x_0,\ldots,x_{J+1}\in(0,1)$ and signed weights $\alpha_i$ such that
\begin{equation*}
    \sum_i \alpha_i x_i^\ell=0,\qquad \ell=0,\ldots,J,
    \quad\text{but}\quad
    \sum_i \alpha_i h(x_i)\ne0.
\end{equation*}
For example, take a nonzero vector in the nullspace of the $(J+1)\times(J+2)$ Vandermonde matrix; if the last display failed for every such point set, $h$ would interpolate as a degree-$J$ polynomial on arbitrary $J+2$ point sets, hence would have degree at most $J$.  Let $w_i=1/(J+2)$ and choose $\varepsilon>0$ small enough that $w_i\pm\varepsilon\alpha_i\ge0$ for all $i$.  The two laws
\begin{equation*}
    \Pi_1=\sum_i (w_i+\varepsilon\alpha_i)\delta_{x_i},
    \qquad
    \Pi_2=\sum_i (w_i-\varepsilon\alpha_i)\delta_{x_i}
\end{equation*}
have the same first $J$ raw moments and therefore the same $J$-repeat count distribution.  Their $k$th signed moments differ by $2\varepsilon\sum_i\alpha_i h(x_i)\ne0$, proving the claim.
\end{proof}

\begin{proof}[Proof of Proposition~\ref{prop:plugin-omega}]
For a test function $\varphi$, set
\begin{equation*}
    h(q)=(2q-1)\varphi(q(1-q)).
\end{equation*}
Then $|h(q)|\le\|\varphi\|_\infty$ and
\begin{equation*}
    |h(q)-h(u)|
    \le\{2\|\varphi\|_\infty+\operatorname{Lip}(\varphi)\}|q-u|,
\end{equation*}
because $q\mapsto q(1-q)$ is one-Lipschitz on $[0,1]$ and $|2q-1|\le1$.  Decompose
\begin{align*}
\E\left|\frac1N\sum_{i=1}^N h(\widehat q_i)-\E h(Q)\right|
&\le
\E\left|\frac1N\sum_{i=1}^N\{h(\widehat q_i)-\E h(\widehat q_i)\}\right| \\
&\quad + \E|h(\widehat q)-h(Q)|.
\end{align*}
The first term is at most $\|\varphi\|_\infty/\sqrt N$ by the variance bound for bounded iid variables.  For the second term,
\begin{equation*}
    \E(|\widehat q-Q|\mid Q)
    \le \sqrt{\operatorname{Var}(\widehat q\mid Q)}
    =\sqrt{Q(1-Q)/J}\le\frac{1}{2\sqrt J}.
\end{equation*}
Combining these inequalities yields \eqref{eq:plugin-bound}.  If $N\to\infty$ and $J_N\to\infty$, the right-hand side tends to zero for each fixed Lipschitz $\varphi$, which gives $L^1$ convergence of the corresponding signed-measure integral and hence convergence in probability.
\end{proof}

\section{Multiple-choice plurality as an extension}
\label{app:multiclass-extension}

The main paper focuses on binary majority voting because the signed measure $\omega_\Pi$ is a one-dimensional invariant of that setting.  For completeness, this appendix records the basic obstruction to a direct scalar-$Q$ extension for multiple-choice plurality.

Binary voting is one-dimensional because every wrong vote is also a vote for the only incorrect alternative.  Multiple-choice plurality has a different geometry.  Let there be $K\ge3$ answer choices.  For a given example, index the correct choice by $0$ and let
\[
    p=(p_0,p_1,\ldots,p_{K-1})\in\Delta_K,
    \qquad
    \Delta_K=\{p\in[0,1]^K:\sum_{j=0}^{K-1}p_j=1\}.
\]
Repeated calls are iid categorical with probabilities $p$.  For $m$ calls, let $N=(N_0,\ldots,N_{K-1})\sim\Mult(m,p)$.  Under uniform tie-breaking among the options attaining the largest count, the conditional plurality accuracy is
\[
    A_{m,K}(p)=
    \E\left[
    \frac{\1\{N_0=\max_j N_j\}}{|\{j:N_j=\max_\ell N_\ell\}|}
    \right].
\]
For a population law $\Gamma$ on $\Delta_K$, the vote curve is $V_{m,K}(\Gamma)=\int A_{m,K}(p)\,\Gamma(\dd p)$.

\begin{proposition}
\label{prop:plurality-endpoint}
For fixed $p\in\Delta_K$,
\begin{equation}
    \lim_{m\to\infty}A_{m,K}(p)
    =
    \frac{\1\{p_0=\max_j p_j\}}{|\{j:p_j=\max_\ell p_\ell\}|}.
    \label{eq:plurality-endpoint}
\end{equation}
Consequently, $V_{m,K}(\Gamma)$ converges to the integral of the right-hand side of \eqref{eq:plurality-endpoint}.
\end{proposition}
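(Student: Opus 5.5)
The plan is to prove the pointwise limit \eqref{eq:plurality-endpoint} by splitting on the structure of the set of maximizers of $p$, and then to pass the limit through the integral by dominated convergence. Let $p^*=\max_\ell p_\ell$ and $S=\{j:p_j=p^*\}$. By the strong law of large numbers, $N/m\to p$ almost surely, and all bounds below only need the weak law or Hoeffding's inequality.

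\emph{Case 1: $0\notin S$.} Fix $j^*\in S$. The event $\{N_0=\max_j N_j\}$ is contained in $\{N_0\ge N_{j^*}\}$. Since $p_{j^*}-p_0>0$, Hoeffding's inequality applied to the iid increments $\1\{\text{call}=0\}-\1\{\text{call}=j^*\}$ shows this event has probability at most $\exp\{-m(p_{j^*}-p_0)^2/2\}\to0$. The integrand is bounded by one, so $A_{m,K}(p)\to0$, which is the right-hand side.

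\emph{Case 2: $0\in S$ and $|S|=1$.} Here $p_0$ exceeds every other coordinate strictly. By the same Hoeffding bound and a union bound over the $K-1$ other coordinates, $\PP\{N_0>N_j\ \forall j\ne0\}\to1$. On that event the integrand equals one, so $A_{m,K}(p)\to1$.

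\emph{Case 3: $0\in S$ and $|S|=s\ge2$.} This is the main obstacle, because the limit $1/s$ comes from symmetry rather than concentration. By the Case~1 argument, with probability tending to one every $j\notin S$ has $N_j<\min_{i\in S}N_i$. On that event the integrand equals
\[
\frac{\1\{N_0=\max_{i\in S}N_i\}}{|\{i\in S:N_i=\max_{\ell\in S}N_\ell\}|},
\]
which depends only on $(N_i)_{i\in S}$. The joint law of $(N_i)_{i\in S}$ is exchangeable in $i\in S$, because these coordinates share the common probability $p^*$. Let $Z_i$ denote the displayed tie-broken indicator with $0$ replaced by $i$. These indicators sum to one exactly, so $\E Z_0=1/s$ exactly for every $m$. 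The integrand on the good event differs from $Z_0$ only on an event of vanishing probability, and both quantities lie in $[0,1]$. Hence $A_{m,K}(p)\to1/s$. The care needed here is just confirming that dropping the non-maximal coordinates does not change the tie count on the good event, which holds because they are strictly below.

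Finally, $0\le A_{m,K}\le1$ and $\Gamma$ is a probability measure, so dominated convergence gives $V_{m,K}(\Gamma)\to\int\lim_m A_{m,K}\,\dd\Gamma$. Measurability of the limit function holds because it is a pointwise limit of the continuous functions $A_{m,K}$, each of which is a polynomial in $p$.
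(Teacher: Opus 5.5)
Your proof is correct and follows essentially the same route as the paper. Both split into cases on the set of maximal coordinates, use concentration to rule out the non-maximal classes, use permutation symmetry among the tied maximal classes, and finish with dominated convergence. Your version differs only in minor ways: you use Hoeffding bounds in place of the strong law, and your exact identity $\sum_{i\in S}Z_i=1$, which gives $\E Z_0=1/s$ for every $m$, makes precise the paper's somewhat informal ``same limiting chance'' step.
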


The endpoint depends on whether the correct option is the modal option, not merely on the scalar correctness probability $p_0$.  The following theorem gives a finite-budget obstruction as well.

\begin{theorem}
\label{thm:q-not-enough}
For every $q\in(0,1)$ and $K\ge3$, there exist two latent categorical vectors with the same correct-answer probability $p_0=q$ but different three-vote plurality accuracies.  In particular, for $K=3$ let
\[
    p^{\mathrm{conc}}=(q,1-q,0),
    \qquad
    p^{\mathrm{diff}}=\left(q,\frac{1-q}{2},\frac{1-q}{2}\right).
\]
For $K>3$, append zero coordinates to these two vectors.  The extra categories are never sampled, so the same three-vote and endpoint calculations apply.
Then
\begin{align}
    A_{3,3}(p^{\mathrm{conc}})&=q^3+3q^2(1-q),
    \label{eq:conc-three}\\
    A_{3,3}(p^{\mathrm{diff}})&=q^3+3q^2(1-q)+\frac{q(1-q)^2}{2}.
    \label{eq:diff-three}
\end{align}
The two quantities differ unless $q\in\{0,1\}$.  For $q\in(1/3,1/2)$, the same pair also has different asymptotic behavior: $p^{\mathrm{conc}}$ has incorrect modal class, while $p^{\mathrm{diff}}$ has correct modal class.
\end{theorem}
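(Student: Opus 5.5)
The plan is a direct enumeration of the $3^3=27$ ordered outcomes of three iid categorical calls (really only the count vectors), followed by a short comparison. For three votes among three classes, the count vector $N=(N_0,N_1,N_2)$ is one of three kinds: some class gets at least two votes (a unique plurality winner), or all three classes get one vote each (a three-way tie). A tie among only two classes at the top cannot occur with three votes. So the plurality rule with uniform tie-breaking is correct exactly on the event $N_0\ge2$. On the event $N=(1,1,1)$ it is correct with probability $1/3$. Hence, for any $p\in\Delta_3$,
\[
A_{3,3}(p)=p_0^3+3p_0^2(1-p_0)+\tfrac13\cdot 6\,p_0p_1p_2
=p_0^3+3p_0^2(1-p_0)+2p_0p_1p_2,
\]
where $6p_0p_1p_2$ is the multinomial probability of $(1,1,1)$. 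I will record this general formula first, since both displayed identities are specializations of it.

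Next I substitute the two vectors. For $p^{\mathrm{conc}}$ we have $p_2=0$, so the tie term vanishes, which gives \eqref{eq:conc-three}. For $p^{\mathrm{diff}}$ we have $p_1p_2=(1-q)^2/4$, so the tie term equals $2q(1-q)^2/4=q(1-q)^2/2$, which gives \eqref{eq:diff-three}. The difference between the two accuracies is $q(1-q)^2/2$, which is nonzero exactly when $q\notin\{0,1\}$. This yields the separation claim on $(0,1)$.

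For $K>3$, appending zero coordinates means the extra classes have probability zero of being sampled. The multinomial count vector therefore lives almost surely on the first three coordinates. The extra classes then have count $0$, so they are never among the maximizers for $m\ge1$. As a result, $A_{m,K}$ of the padded vector equals $A_{m,3}$ of the original, and in particular the three-vote values carry over unchanged.

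For the asymptotic statement I invoke Proposition~\ref{prop:plurality-endpoint}. When $q\in(1/3,1/2)$, the vector $p^{\mathrm{conc}}$ has $1-q>q$, so class $1$ is the unique mode and the limit is $0$. The vector $p^{\mathrm{diff}}$ has $(1-q)/2<q$ because $q>1/3$, so the correct class is the unique mode and the limit is $1$. Zero-padding does not affect the mode, since $q>0$.

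There is no real obstacle here. The only points needing care are the tie bookkeeping (checking that with three votes no two-way top tie can occur, so the tie-weight factor is $1/3$ only on the all-distinct event) and confirming that padded zero classes never enter the tie set.
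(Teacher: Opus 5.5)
Your proposal is correct and follows essentially the same route as the paper: enumerate the three-vote count vectors, note that the correct class wins outright exactly when $N_0\ge 2$ and wins with probability $1/3$ on the $(1,1,1)$ tie, observe that zero-padded classes never enter the maximizer set, and compare modes for $q\in(1/3,1/2)$. Stating the general identity $A_{3,3}(p)=p_0^3+3p_0^2(1-p_0)+2p_0p_1p_2$ before specializing is a tidier packaging of the paper's case-by-case computation, and your explicit check that no two-way top tie can occur with three votes fills in a step the paper leaves implicit.
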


Thus a $Q$-only signed measure is not an unrestricted invariant for multiple-choice plurality.  The error allocation over wrong labels is part of the latent geometry.

A one-dimensional theory can still be useful under an explicit wrong-answer symmetry assumption.  Suppose
\begin{equation}
    p_0=q,
    \qquad
    p_1=\cdots=p_{K-1}=\frac{1-q}{K-1}.
    \label{eq:symmetric-wrong}
\end{equation}
Define $A^{\sym}_{m,K}(q)=A_{m,K}(p)$ for $p$ in \eqref{eq:symmetric-wrong}.  Then $V^{\sym}_{m,K}=\E A^{\sym}_{m,K}(Q)$ is determined by the scalar law of $Q$.  The endpoint threshold becomes
\[
    \lim_{m\to\infty}A^{\sym}_{m,K}(q)
    =
    \begin{cases}
    1, & q>1/K,\\
    1/K, & q=1/K,\\
    0, & q<1/K.
    \end{cases}
\]
For $K=2$, this reduces to the binary threshold $q=1/2$ and the signed measure $\omega_\Pi$ gives an exact moment representation.  For $K\ge3$, the symmetric-wrong curve remains a polynomial transform of $Q$, but its increments no longer collapse to the ordinary moments of a single branch variable $q(1-q)$.  The binary signed measure is therefore a special low-dimensional invariant and does not extend generically to plurality voting.

\begin{proof}[Proof of Proposition~\ref{prop:plurality-endpoint}]
By the strong law of large numbers, $N_j/m\to p_j$ almost surely for every $j$.  If $p_0$ is strictly larger than all other coordinates, then eventually $N_0$ is the unique largest count and the tie-broken plurality rule is correct with probability tending to one.  If some wrong coordinate is strictly larger than $p_0$, then eventually a wrong option has a larger count and the accuracy tends to zero.

It remains to handle ties at the maximal probability.  Let $M=\{j:p_j=\max_\ell p_\ell\}$ and let $a=|M|$.  Any class outside $M$ has probability separated below the maximum, so the strong law implies that such classes eventually cannot attain the largest count.  Conditional on the counts of classes in $M$, the joint law is invariant under permutations of these $a$ classes, and the tie-breaking rule is uniform among classes attaining the largest count.  Therefore each class in $M$ has the same limiting chance of being selected, and the total chance over the $a$ tied maximal classes is one.  If $0\in M$, the limiting success probability is $1/a$; if $0\notin M$, it is zero.  This gives \eqref{eq:plurality-endpoint}.  The population statement follows from dominated convergence.
\end{proof}

\begin{proof}[Proof of Theorem~\ref{thm:q-not-enough}]
It suffices to compute the displayed $K=3$ vectors.  For $K>3$, append zero coordinates; those extra options have zero probability and therefore do not affect either three-vote plurality or the asymptotic modal comparison.

For $p^{\mathrm{conc}}=(q,1-q,0)$, the third option never appears, so three-vote plurality is binary majority between the correct option and the first wrong option.  Correct plurality requires at least two correct votes, giving \eqref{eq:conc-three}.

For $p^{\mathrm{diff}}=(q,a,a)$ with $a=(1-q)/2$, correct plurality occurs with probability one when the correct option appears two or three times, contributing $q^3+3q^2(1-q)$.  If the correct option appears exactly once and the two wrong options each appear once, all three options tie and uniform tie-breaking chooses the correct one with probability $1/3$.  The probability of counts $(1,1,1)$ is $6qa^2$, so this tie contributes $2qa^2=q(1-q)^2/2$.  All other cases with exactly one correct vote lose.  This proves \eqref{eq:diff-three}.

For $q\in(1/3,1/2)$, the concentrated vector has $1-q>q$, so a wrong option is modal and the endpoint is zero.  The diffuse vector has $q>(1-q)/2$, so the correct option is uniquely modal and the endpoint is one.
\end{proof}

\end{document}